\PassOptionsToPackage{table}{xcolor}
\documentclass[sigconf]{acmart}

\copyrightyear{2020}
\acmYear{2020}
\setcopyright{acmcopyright}\acmConference[KDD '20]{Proceedings of the 26th ACM SIGKDD Conference on Knowledge Discovery and Data Mining}{August 23--27, 2020}{Virtual Event, CA, USA}
\acmBooktitle{Proceedings of the 26th ACM SIGKDD Conference on Knowledge Discovery and Data Mining (KDD '20), August 23--27, 2020, Virtual Event, CA, USA} \acmPrice{15.00}
\acmDOI{10.1145/3394486.3403224}
\acmISBN{978-1-4503-7998-4/20/08}

\usepackage{colortbl}
\usepackage{xcolor}
\usepackage{dirtytalk}
\usepackage{float}
\usepackage[matit]{echodefs}
\usepackage{cleveref}
\newtheorem{problem}{Problem}
\usepackage{thmtools, thm-restate}
\usepackage{algorithm,algpseudocode}
\usepackage{todonotes}
\usepackage[bottom]{footmisc}
\usepackage{footnote}
\makesavenoteenv{tabular}
\makesavenoteenv{table}
\usepackage{array}
\usepackage{multirow}
\definecolor{Gray}{gray}{0.9}

\pdfpageattr {/Group << /S /Transparency /I true /CS /DeviceRGB>>} 

\newcommand\sbullet[1][.5]{\mathbin{\vcenter{\hbox{\scalebox{#1}{$\bullet$}}}}}

\newcommand{\etal}{\textit{et al. }}


\def\bL{{\mathbb L}}
\def\bH{{\mathbb H}}
\def\bJ{{\mathbb J}}
\def\bI{{\mathbb I}}
\def\bK{{\mathbb K}}

\AtBeginDocument{%
  \providecommand\BibTeX{{%
    \normalfont B\kern-0.5em{\scshape i\kern-0.25em b}\kern-0.8em\TeX}}}







\begin{document}
 \fancyhead{}
\title{Hyperbolic Distance Matrices}



\author{Puoya Tabaghi}
\orcid{0000-0002-1914-5950}
\affiliation{%
 \institution{University of Illinois at Urbana-Champaign}
}
\email{tabaghi2@illinois.edu}

\author{Ivan Dokmani\'c}
\orcid{0000-0001-7132-5214}
\affiliation{%
 \institution{University of Basel}
}
\email{ivan.dokmanic@unibas.ch}


\begin{abstract}
Hyperbolic space is a natural setting for mining and visualizing data with hierarchical structure. In order to compute a hyperbolic embedding from comparison or similarity information, one has to solve a hyperbolic distance geometry problem. In this paper, we propose a unified framework to compute hyperbolic embeddings from an arbitrary mix of noisy metric and non-metric data. Our algorithms are based on semidefinite programming and the notion of a hyperbolic distance matrix, in many ways parallel to its famous Euclidean counterpart. A central ingredient we put forward is a semidefinite characterization of the hyperbolic Gramian---a matrix of Lorentzian inner products. This characterization allows us to formulate a semidefinite relaxation to efficiently compute hyperbolic embeddings in two stages: first, we complete and denoise the observed hyperbolic distance matrix; second, we propose a spectral factorization method to estimate the embedded points from the hyperbolic distance matrix. We show through numerical experiments how the flexibility to mix metric and non-metric constraints allows us to efficiently compute embeddings from arbitrary data.
\end{abstract}
\begin{CCSXML}
<ccs2012>
<concept>
<concept_id>10003120.10003145.10003146.10010890</concept_id>
<concept_desc>Human-centered computing~Hyperbolic trees</concept_desc>
<concept_significance>500</concept_significance>
</concept>
<concept>
<concept_id>10010147.10010257</concept_id>
<concept_desc>Computing methodologies~Machine learning</concept_desc>
<concept_significance>500</concept_significance>
</concept>
<concept>
<concept_id>10003033</concept_id>
<concept_desc>Networks</concept_desc>
<concept_significance>300</concept_significance>
</concept>
</ccs2012>
\end{CCSXML}

\ccsdesc[500]{Human-centered computing~Hyperbolic trees}
\ccsdesc[500]{Computing methodologies~Machine learning}
\ccsdesc[300]{Networks}

\keywords{distance geometry, hyperbolic space, semidefinite program, spectral factorization}

\maketitle
\section{INTRODUCTION} \label{sec:intro}

Hyperbolic space is roomy. It can embed hierarchical structures uniformly and with arbitrarily low distortion \cite{lamping1994laying, sarkar2011low}. Euclidean space cannot achieve comparably low distortion even using an unbounded number of dimensions \cite{linial1995geometry}. 

Embedding objects in hyperbolic spaces has found a myriad applications in exploratory science, from visualizing hierarchical structures such as social networks and link prediction for symbolic data \cite{verbeek2014metric,nickel2017poincare} to natural language processing \cite{dhingra2018embedding,le2019inferring}, brain networks \cite{cannistraci2013link}, gene ontologies \cite{ashburner2000gene} and recommender systems \cite{vinh2018hyperbolic, chamberlain2019scalable}. 

Commonly in these applications, there is a tree-like data structure which encodes \textit{similarity} between a number of entities. We experimentally observe some relational information about the structure and the data mining task is to find a geometric representation of the entities consistent with the experimental information. In other words, the task is to compute an embedding. This concept is closely related to the classical distance geometry problems and multidimensional scaling (MDS) \cite{kruskal1978multidimensional} in Euclidean spaces \cite{liberti2014euclidean, dokmanic2015euclidean}. 

The observations can be metric or non-metric. Metric observations convey (inexact) distances; for example, in internet distance embedding a small subset of nodes with complete distance information are used to estimate the remaining distances \cite{shavitt2008hyperbolic}. Non-metric observations tell us which pairs of entities are closer and which are further apart. The measure of closeness is typically derived from domain knowledge; for example, word embedding algorithms aim to relate semantically close words and their topics \cite{mikolov2013distributed,pennington2014glove}.  

In scientific applications it is desirable to compute good low-dimensional hyperbolic embeddings. Insisting on low dimension not only facilitates visualization, but also promotes simple explanations of the phenomenon under study. However, in most works that leverage hyperbolic geometry the embedding technique is not the primary focus and the related computations are often ad hoc. The situation is different in the Euclidean case, where the notions of MDS, Euclidean distance matrices (EDMs) and their characterization in terms of positive semidefinite Gram matrices play a central role in the design and analysis of algorithms \cite{liberti2014euclidean,alfakih1999solving}.

In this paper, we focus on computing low-dimensional hyperbolic embeddings. While there exists a strong link between Euclidean geometry and positive (semi)definiteness, we prove that what we call \emph{hyperbolic distance matrices} (HDMs) can also be characterized via semidefinite constraints. Unlike in the Euclidean case, the hyperbolic analogy of the Euclidean Gram matrix is a linear combination of two rank-constrained semidefinite variables. Together with a spectral factorization method to directly estimate the hyperbolic points, this characterization gives rise to flexible embedding algorithms which can handle diverse constraints and mix metric and non-metric data. 

\subsection{Related Work}
The usefulness of hyperbolic space stems from its ability to efficiently represent the geometry of complex networks \cite{asta2014geometric,krioukov2010hyperbolic}. Embedding metric graphs with underlying hyperbolic geometry has applications in word embedding \cite{mikolov2013distributed,pennington2014glove}, geographic routing  \cite{kleinberg2007geographic}, routing in dynamical graphs \cite{cvetkovski2009hyperbolic}, odor embedding \cite{zhou2018hyperbolic}, internet network embedding for delay estimation and server selection \cite{shavitt2008hyperbolic,boguna2010sustaining}, to name a few. In the literature such problems are known as hyperbolic multidimensional scaling \cite{de2018representation}. 

There exist Riemann gradient-based approaches \cite{chowdhary2018improved,nickel2018learning,nickel2017poincare,le2019inferring} which can be used to directly estimate such embeddings from metric measurements \cite{roller2018hearst}. We emphasize that these methods are iterative and only guaranteed to return a locally optimal solution. 
On the other hand, there exist one-shot methods to estimate hyperbolic embeddings from a \textit{complete} set of measured distances. The method of Wilson \etal \cite{wilson2014spherical} is based on spectral factorization of an inner product matrix (we refer to it as hyperbolic Gramian) that directly minimizes a suitable \textit{stress}. In this paper, we derive a semidefinite relaxation to estimate the \textit{missing} measurements and denoise the distance matrix, and then follow it with the spectral embedding algorithm.

Non-metric (or order) embedding has been proposed to learn visual-semantic hierarchies from ordered input pairs by embedding symbolic objects into a low-dimensional space \cite{vendrov2015order}. In the Euclidean case, stochastic triplet embeddings \cite{van2012stochastic}, crowd kernels \cite{tamuz2011adaptively}, and generalized non-metric MDS \cite{agarwal2007generalized} are some well-known order embedding algorithms. For embedding hierarchical structures, Ganea \etal \cite{ganea2018hyperbolic} model order relations as a family of nested geodesically convex cones. Zhou \textit{et. al.} \cite{zhou2018hyperbolic} show that odors can be efficiently embedded in hyperbolic space provided that the similarity between odors is based on the statistics of their co-occurrences within natural  mixtures.

\subsection{Contributions} 

We summarize our main contributions as follows:
\begin{itemize}
\item {\bf Semidefinite characterization of HDMs:} We introduce HDMs as an elegant tool to formalize distance problems in hyperbolic space; this is analogous to Euclidean distance matrices (EDM). We derive a semidefinite characterization of HDMs by studying the properties of hyperbolic Gram matrices---matrices of Lorentzian (indefinite) inner products of points in a hyperbolic space.
\item {\bf A flexible algorithm for hyperbolic distance geometry problems (HDGPs):}  We use the semidefinite characterization to propose a flexible embedding algorithm based on semidefinite programming. It allows us to seamlessly combine metric and non-metric problems in one framework and to handle a diverse set of constraints. The non-metric and metric measurements are imputed as linear and quadratic constraints.
\item {\bf Spectral factorization and projection:} We compute the final hyperbolic embeddings with a simple, closed-form spectral factorization method.\footnote{\label{fn1}After posting the first version of our manuscript we became aware that such a one-shot spectral factorization technique was proposed at least as early as in \cite{wilson2014spherical}. The same technique is also used by \cite{keller2020hydra}.}
 We also propose a suboptimal method to find a low-rank approximation of the hyperbolic Gramian in the desired dimension. 
\end{itemize}
\begin{table}[t]
  \centering
    \caption{Essential elements in semidefinite approach for distance problems, Euclidean versus hyperbolic space.}
      	\small
\begin{tabular}{cc}
\toprule
Euclidean & Hyperbolic\\
\toprule
\rowcolor{Gray} Euclidean Distance Matrix & Hyperbolic Distance Matrix\\
Gramian & H-Gramian \\
\rowcolor{Gray} Semidefinite relaxation & Semidefinite relaxation \\
\rowcolor{Gray} to complete an EDM & to complete an HDM \\
Spectral factorization of a  & Spectral factorization of an \\
Gramian to estimate the points & H-Gramian to estimate the points\\
\bottomrule
\end{tabular}
  \label{tab:euclidean_versus_loid}
\end{table}

\subsection{Paper Organization}
We first briefly review the analytical models of hyperbolic space and formalize hyperbolic distance geometry problems (HDGPs) in \Cref{sec:HDGP}. Our framework is  parallel with semidefinite approaches for Euclidean distance problems as per \Cref{tab:euclidean_versus_loid}. In the 'Loid model, we define hyperbolic distance matrices to compactly encode hyperbolic distance measurements. We show that an HDM can be characterized in terms of the matrix of \textit{indefinite} inner products, the hyperbolic Gramian. In \Cref{sec:hdgp_hdm}, we propose a semidefinite representation of hyperbolic Gramians, and in turn HDMs. We cast HDGPs as rank-constrained semidefinite programs, which are then convexified by relaxing the rank constraints. We develop a spectral method to find a sub-optimal low-rank approximation of the hyperbolic Gramian, to the correct embedding dimension. Lastly, we propose a closed-form factorization method to estimate the embedded points. This framework lets us tackle a variety of embedding problems, as shown in \Cref{sec:experimental_results}, with real (odors) and synthetic (random trees) data. The proofs of propositions and derivations of proposed algorithms are given in the appendix and a summary of used notations is given in \Cref{tab:notations}.

\begin{table}[h]
  \centering
    \caption{Summary of notations.}
	\scriptsize
\begin{tabular}{ll}
\toprule
Symbol & Meaning\\
\toprule
\rowcolor{Gray}$[m]$  & Short for $\set{1, \ldots, m}$ \\
$[M]^2_{\mathrm{as}}$  & Asymmetric pairs $\set{(m,n): m < n, m,n \in [M]}$ \\
\rowcolor{Gray} $x = [x_0, \ldots, x_{m-1}]^\T$  & A vector in $\R^m$ \\
$X=(x_{i,j})_{i \in [m], j \in [n]}$  & A matrix in $\R^{m \times n}$ \\
\rowcolor{Gray} $X \succeq 0$ &  A positive semidefinite (square) matrix \\
 $\norm{X}_{F}$  & Frobenius norm of $X$ \\
\rowcolor{Gray} $\norm{X}_{2}$ & Operator norm of $X$ \\
 $\norm{X}_{1,2}$ & The $\ell_{2}$ norm of columns' $\ell_1$ norms, $\norm{[\norm{x_1}_{1}, \ldots, \norm{x_n}_{1}]^\T}_2$\\
\rowcolor{Gray} $\mathbb{E}_N[x]$ & Empirical expectation of a random variable, $N^{-1}\sum_{n=1}^{N}x_n$ \\
 $e_m \in \R^M$ & The $m$-th standard basis vector in $\R^M$ \\
\rowcolor{Gray}$P_{r}(X)$ & The projection of $X \succeq 0$ onto the span of its top $r$ eigenvectors \\
  $1$ & All-one vector of appropriate dimension\\
\rowcolor{Gray} $0$ &  All-zero vector of appropriate dimension \\
\bottomrule
\end{tabular}
  \label{tab:notations}
\end{table}

\section{HYPERBOLIC DISTANCE GEOMETRY PROBLEMS}\label{sec:HDGP}
\subsection{Hyperbolic Space} \label{sec:hyperbolic_space}

Hyperbolic space is a simply connected Riemannian manifold with constant negative  curvature \cite{cannon1997hyperbolic, benedetti2012lectures}. In comparison, Euclidean and elliptic geometries are spaces with zero (flat) and constant positive curvatures. There are five isometric models for hyperbolic space: half-space ($\bH^d$), Poincar\'e (interior of the disk) ($\bI^d$), jemisphere ($\bJ^d$), Klein ($\bK^d$), and 'Loid ($\bL^d$)  \cite{cannon1997hyperbolic} (\Cref{fig:hyperbola}). Each provides unique insights into the properties of hyperbolic geometry.
\begin{figure}[t]
   \includegraphics[width=.9 \linewidth]{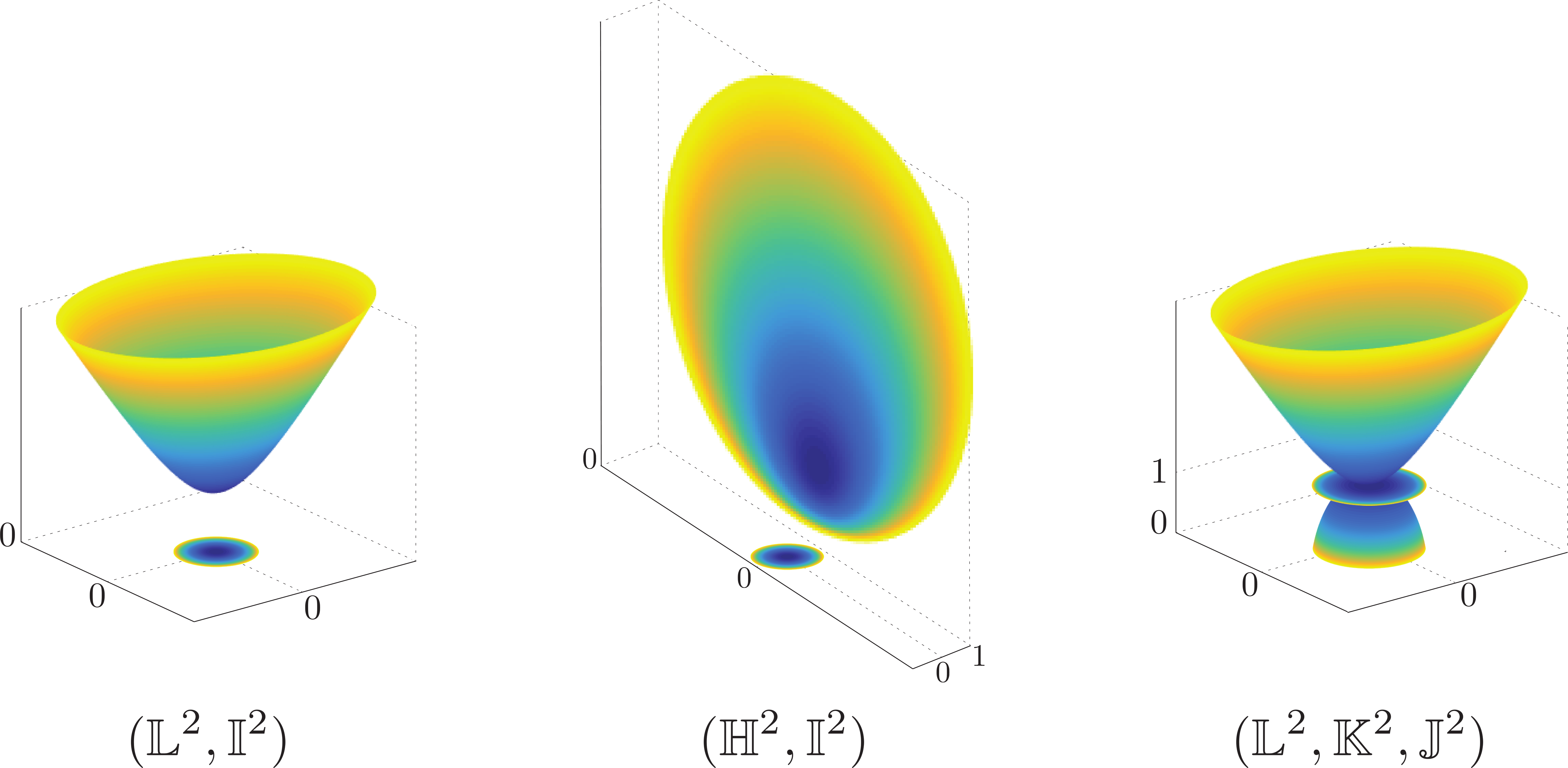}
  \caption{Models of hyperbolic space with level sets (colors) illustrating isometries.}
  \label{fig:hyperbola}
\end{figure}

In the machine learning community the most popular models of hyperbolic geometry are Poincar\'e and 'Loid. We work in the 'Loid model as it has a simple, tractable distance function. It lets us cast the HDGP (formally defined in \Cref{sub:HDGP}) as a rank-constrained semidefinite program. Importantly, it also leads to a closed-form embedding by a spectral method. For better visualization, however, we map the final embedded points to the Poincar\'e model via the stereographic projection, see \Cref{sec:poincare_model,sec:experimental_results}.
\subsubsection{'Loid Model} \label{sec:loid_model}
Let $x$ and $y$ be vectors in $\R^{d+1}$ with $d \geq 1$. The Lorentzian inner product of $x$ and $y$ is defined as
\begin{equation} \label{eq:lorentz_inner_product}
[x, y] = x^\T Hy,
\end{equation}
where 
\begin{equation}\label{eq:H}
H = \begin{pmatrix}
-1 & 0^\T\\
0 & I
\end{pmatrix} \in \R^{(d+1) \times (d+1)}.
\end{equation}
This is an indefinite inner product on $\R^{d+1}$. 
The Lorentzian inner product has almost all the properties of ordinary inner products, except that 
\begin{equation*}
\norm{x}_{H}^2 \bydef [x,x]
\end{equation*}
can be positive, zero, or negative. The vector space $\R^{d+1}$ equipped with the Lorentzian inner product \eqref{eq:lorentz_inner_product} is called a Lorentzian $(d+1)$-space, and is denoted by $\R^{1,d}$. In a Lorentzian space we can define notions similar to the Gram matrix, adjoint, and unitary matrices known from Euclidean spaces as follows.

\begin{definition}[H-adjoint \cite{gohberg1983matrices}]
The H-adjoint $R^{[*]}$ of an arbitrary matrix $R \in \R^{(d+1) \times (d+1)}$ is characterized by
\begin{equation*}
[Rx,y] = [x,R^{[*]}y], ~ \forall x, y \in \R^{d+1}.
\end{equation*}
Equivalently, 
\begin{equation}\label{eq:H_adjoint_formula}
R^{[*]} = H^{-1} R^\T H.
\end{equation}
\end{definition} 
\begin{definition}[H-unitary matrix \cite{gohberg1983matrices}]
An invertible matrix $R$ is called H-unitary if $R^{[*]} = R^{-1}$ .
\end{definition}

The 'Loid model of $d$-dimensional hyperbolic space is a Riemannian manifold $\mathcal{L}^d= (\bL^{d} , (g_x)_x )$, where
\begin{equation*}
\bL^{d} = \set{x \in \R^{d+1}: \norm{x}_{H}^2 = - 1, x_0 > 0}
\end{equation*}
and $g_x = H$ is the Riemannian metric. 

\begin{definition}[Lorentz Gramian, H-Gramian] \label{def:lorentz_gramain}
Let the columns of $X = [x_1, x_2, \cdots, x_N]$ be the positions of $N$ points in $\R^{d+1}$ (resp. $\mathbb{L}^d$). We define their corresponding Lorentz Gramian (resp. H-Gramian) as
\begin{align*}
G	 &= \left( [x_i, x_j] \right)_{i,j \in [N]} \\
&= X^\T H X
\end{align*}
where $H$ is the indefinite matrix given by \eqref{eq:H}. 
\end{definition}
The subtle difference between the Lorentz Gramian (defined for points in $\R^{d+1}$) and the H-Gramian (defined only on $\bL^d \subset \R^{d+1}$) will be important for the low-rank projection and the spectral factorization steps in Section \ref{sec:hdgp_hdm}. The indefinite inner product \eqref{eq:lorentz_inner_product} also determines the distance between $x,y \in \bL^d$, as
\begin{equation}\label{eq:loid_dist}
d(x, y) = \mathrm{acosh}(-[x, y]).
\end{equation}

\subsubsection{Poincar\'e Model} \label{sec:poincare_model}
In the Poincar\'e model ($\bI^d$), the points reside in the unit $d$-dimensional Euclidean ball. The distance between $x,y \in \bI^d$ is given by
\begin{equation}\label{eq:poincare_dist}
d(x, y) = \mathrm{acosh} \Big( 1 + 2\frac{\norm{x-y}^2}{(1-\norm{x}^{2})(1-\norm{y}^{2})} \Big).
\end{equation}
The isometry between the 'Loid and the Poincar\'e model, $h : \bL^{d} \rightarrow \bI^{d}$ is called the \textit{stereographic projection}. For $y = h(x)$, we have
\begin{equation} \label{eq:h_x}
y_i = \frac{x_{i+1}}{x_0+1},
\end{equation}
The inverse of stereographic projection is given by
\begin{equation}\label{eq:h_inv_y}
x = h^{-1} \left(  y \right) \\
= \frac{1}{1-\norm{y}^{2}} \left[ \begin{array}{c} 1+\norm{y}^{2}  \\ 2y \end{array}\right] .
\end{equation}
The isometry between the 'Loid and Poincar\'e models makes them equivalent in their embedding capabilities. However, the Poincar\'e model facilitates visualization of the embedded points in a bounded disk, whereas the 'Loid model is an unbounded space. 

\subsection{Hyperbolic Distance Problems}\label{sub:HDGP}

In a metric hyperbolic distance problem, we want to find a point set $x_1, \ldots, x_N \in \bL^d$, such that 
\begin{equation*}
d_{mn} = \mathrm{acosh} \left( -[x_m, x_n] \right), ~ \mbox{ for all } (m,n) \in \mathcal{C},
\end{equation*}
for a subset of measured distances $\mathcal{D} = \set{d_{mn}: (m,n) \in \mathcal{C} \subseteq [N]^2_{\mathrm{as}}}$.

In many applications we have access to the \textit{true} distances only through an unknown non-linear map $\tilde{d}_{mn} = \phi(d_{mn})$; examples are connectivity strength of neurons \cite{giusti2015clique} or odor co-ocurrence statistics \cite{zhou2018hyperbolic}. If all we know is that $\phi(\cdot)$ is a monotonically increasing function, then only the ordinal information has remained intact,
\begin{equation*}
d_{kl} \leq d_{mn} \Leftrightarrow \tilde{d}_{kl} \leq \tilde{d}_{mn}.
\end{equation*}
This leads to non-metric problems in which the measurements are in the form of binary comparisons \cite{agarwal2007generalized}.

\begin{definition}\label{def:ordinal_set}
For a set of binary distance comparisons of the form $d_{kl} \leq d_{mn}$, we define the set of ordinal distance measurements as
\begin{equation*}
\mathcal{O} = \set{(k,l,m,n): d_{kl} \leq d_{mn}, (k,l), (m,n) \in [N]^2_{\mathrm{as}}  }.
\end{equation*}
\end{definition}

We are now in a position to give a unified definition of metric and non-metric embedding problems in a hyperbolic space.
\begin{problem}\label{prob:dgp_general}
A hyperbolic distance geometry problem aims to find $x_1, \ldots, x_N \in \bL^d$, given 
\begin{itemize}
\item a subset of pairwise distances $\mathcal{D}$ such that
\begin{equation*}
d_{mn} = d(x_m, x_n), ~~ \mbox{ for all } d_{mn} \in \mathcal{D}
\end{equation*}
\item and/or a subset of ordinal distances measurements $\mathcal{O}$ such that 
\begin{equation*}
d(x_{i_1}, x_{i_2}) \leq d( x_{i_3}, x_{i_4}), ~~ \mbox{ for all } i \in \mathcal{O}.
\end{equation*}
\end{itemize} 
where $d(x,y) = \mathrm{acosh} \left( -[x,y] \right)$ and $i =  (i_1, i_2, i_3, i_4)$.
\end{problem}
We denote the complete sets of metric and non-metric measurements by $\mathcal{D}_c$ and $\mathcal{O}_c$.

\section{HYPERBOLIC DISTANCE MATRICES}\label{sec:hdgp_hdm}
We now introduce hyperbolic distance matrices in analogy with Euclidean distance matrices to compactly encode inter-point distances of a set of points $x_1, \ldots, x_N \in \bL^d$.
\begin{definition}
The hyperbolic distance matrix (HDM) corresponding to the list of points $X = [x_1, \ldots, x_N] \in (\bL^{d} )^N$ is defined as
\begin{equation*}
D = \mathcal{D}(X) =\left( d(x_i, x_j) \right)_{i,j \in [N]}.
\end{equation*}
The $ij$-th element of $\mathcal{D}(X)$ is hyperbolic distance between $x_i$ and $x_j$, given by $d(x_i, x_j) = \mathrm{acosh}(-[x_i , x_j])$ and for all $i,j \in [N]$.
\end{definition}

HDMs are characterized by Lorentzian inner products which allows us to leverage the definition of an H-Gramian (\Cref{def:lorentz_gramain}).
Given points $x_1, \ldots, x_N \in \bL^d$, we compactly write the HDM corresponding to $G$ as
\begin{equation}
    \label{eq:hdm_via_gram}
    D  = \mathrm{acosh} [-G],
\end{equation}
where $\mathrm{acosh} [\cdot]$ is an elementwise $\mathrm{acosh}(\cdot)$ operator.  

We now state our first main result: a  semidefinite characterization of $H$-Gramians. This is a key step in casting HDGPs as rank-constrained semidefinite programs. 
\begin{restatable}[Semidefinite characterization of H-Gramian]{proposition}{chg}\label{prop:char_hyper_gram}
Let $G$ be the hyperbolic Gram matrix for a set of points $x_1, \cdots, x_N \in \bL^{d }$. Then, 
\begin{align*} 
&         G = G^{+} - G^{-}   & & & \\
& \text{where}     && G^{+}, G^{-} \succeq 0, & \\
&                   	& & \rank{G^{+}} \leq d, \\
&  					& & \rank{G^{-}} \leq 1, \\
&                   	& & \diag{G} = -1, \\
&                   	& &e_i^\T G e_j \leq -1, ~~ \forall i,j \in [N].
\end{align*}
Conversely, any matrix $G \in \R^{N \times N}$ that satisfies the above conditions is a hyperbolic Gramian for a set of $N$ points in $\bL^d$.
\end{restatable}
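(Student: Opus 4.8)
The plan is to prove both directions by making explicit the correspondence between a point list $X = [x_1, \ldots, x_N] \in (\bL^d)^N$ and a pair of factors $(G^{+}, G^{-})$. Everything rests on the splitting of the indefinite metric, $H = \mathrm{diag}(-1, I_d)$. Writing $X = \begin{pmatrix} b^\T \\ \bar X \end{pmatrix}$, where $b \in \R^N$ collects the zeroth coordinates $b_i = (x_i)_0$ and $\bar X \in \R^{d \times N}$ holds the remaining $d$ rows, I get the decomposition $G = X^\T H X = \bar X^\T \bar X - b b^\T$, which already suggests the roles of $G^{+}$ and $G^{-}$.

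For the forward direction I would set $G^{+} = \bar X^\T \bar X$ and $G^{-} = b b^\T$. Both are positive semidefinite by construction, with $\rank{G^{+}} \le d$ (a Gram matrix of $d$-dimensional vectors) and $\rank{G^{-}} \le 1$. The identity $\diag{G} = -1$ is exactly the defining membership constraint $\norm{x_i}_H^2 = -1$ of $\bL^d$, and the off-diagonal bound follows from the distance formula \eqref{eq:loid_dist}: since $\cosh \ge 1$ we have $e_i^\T G e_j = [x_i, x_j] = -\cosh d(x_i, x_j) \le -1$. This direction is routine computation.

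The converse is where the work lies. Given $G^{+}, G^{-}$ obeying the hypotheses, I would factor $G^{+} = U^\T U$ with $U \in \R^{d \times N}$ (possible since $G^{+} \succeq 0$ and $\rank{G^{+}} \le d$) and $G^{-} = w w^\T$ with $w \in \R^N$ (possible since $G^{-} \succeq 0$ and $\rank{G^{-}} \le 1$). Writing $u_i$ for the columns of $U$, I have $G_{ij} = u_i^\T u_j - w_i w_j$, and the diagonal constraint forces $w_i^2 = 1 + \norm{u_i}^2 \ge 1$, so every $w_i \ne 0$. The natural candidate is $x_i = (w_i, u_i^\T)^\T$, which satisfies $\norm{x_i}_H^2 = -1$ and reproduces $[x_i, x_j] = G_{ij}$ by construction; the only missing ingredient is the sign condition $(x_i)_0 = w_i > 0$ needed for $x_i \in \bL^d$.

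The main obstacle is therefore proving that all the $w_i$ share a common sign, so that replacing $w$ by $-w$ (which leaves $G^{-} = w w^\T$ untouched) makes them all positive. Here I would exploit the off-diagonal constraint $G_{ij} \le -1$, i.e. $w_i w_j \ge 1 + u_i^\T u_j$. Suppose toward a contradiction that $w_i w_j < 0$ for some pair; then $1 + u_i^\T u_j < 0$. On the other hand $w_i^2 w_j^2 = (1+\norm{u_i}^2)(1+\norm{u_j}^2) \ge (1 + u_i^\T u_j)^2$, since the difference equals $\norm{u_i - u_j}^2 + \big(\norm{u_i}^2\norm{u_j}^2 - (u_i^\T u_j)^2\big)$, a sum of a perfect square and the nonnegative Cauchy--Schwarz gap. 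Taking square roots and using $1 + u_i^\T u_j < 0$ yields $w_i w_j \le 1 + u_i^\T u_j$, which combined with $w_i w_j \ge 1 + u_i^\T u_j$ forces equality throughout; but then $\norm{u_i - u_j} = 0$, so $u_i = u_j$ and $1 + u_i^\T u_j = 1 + \norm{u_i}^2 \ge 1 > 0$, a contradiction. Hence $w_i w_j > 0$ for all $i,j$, the signs agree, and the construction yields a valid point set in $\bL^d$ whose H-Gramian is $G$.
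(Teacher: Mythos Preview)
Your proof is correct and follows essentially the same route as the paper: split $X$ into its zeroth row and its remaining $d$ rows to get $G^{+}=\bar X^\T\bar X$, $G^{-}=bb^\T$, and in the converse factor $G^{\pm}$ and argue that the zeroth coordinates share a common sign. The only cosmetic differences are that the paper derives the off-diagonal bound $G_{ij}\le -1$ directly via Cauchy--Schwarz (applied to the vectors $(1,\bar x_i)$ and $(1,\bar x_j)$) rather than by invoking the distance formula, and in the converse the paper's sign argument is a touch shorter: if $w_iw_j<0$ then $G_{ij}=-w_iw_j+u_i^\T u_j\ge \sqrt{1+\|u_i\|^2}\sqrt{1+\|u_j\|^2}-\|u_i\|\|u_j\|\ge 0$, contradicting $G_{ij}\le -1$, without needing to push to the equality case $u_i=u_j$.
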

The proof is given in \Cref{prop:char_hyper_gram}.

\subsection{Solving for the H-Gramians} \label{sec:semi_def_hdgp}

While \Cref{prob:dgp_general} could be formalized directly in $X$ domain, this approach is unfavorable as the optimization domain, $\bL^d$, is a non-convex set. What is more, the hyperbolic distances
\begin{equation}\label{eq:non_linear_term}
d(x_m, x_n) =  \mathrm{acosh}\left( - e_m^{\T} X^{\T} H Xe_n \right)
\end{equation}
are non-linear functions of $X$ with an unbounded gradient \cite{de2018representation}.
Similar issues arise when computing embeddings in other spaces such as Euclidean \cite{dokmanic2015euclidean} or the space of polynomial trajectories \cite{tabaghi2019kinetic}. A particularly effective strategy in the Euclidean case is the semidefinite relaxation which relies on the simple fact that the Euclidean Gramian is positive semidefinite. We thus proceed by formulating a semidefinite relaxation for hyperbolic embeddings based on \Cref{prop:char_hyper_gram}. 

Solving the HDGP involves two steps, summarized in \Cref{alg:hdgp}:
\begin{enumerate}
\item Complete and denoise the HDM via a semidefinite program;
\item Compute an embedding of the clean HDM: we propose a closed-form spectral factorization method.
\end{enumerate}
Note that step (2) is independent of step (1): given accurate hyperbolic distances, spectral factorization will give the points that reproduce them. However, since the semidefinite relaxation might give a Gramian with a higher rank than desired, eigenvalue thresholding in step (2) might move the points off of $\bL^d$. That is because eigenvalue thresholding can violate the necessary condition for the hyperbolic norm, $\norm{x}_H^2 = -1$, or $\diag{G} = -1$ in \Cref{prop:char_hyper_gram}. We fix this by projecting each individual point to $\bL^d$. The spectral factorization and the projection are summarized in \Cref{alg:sdp,alg:RP}.

\begin{algorithm}[b]
\caption{HDGP algorithm}\label{alg:hdgp}
\begin{algorithmic}[1]
\Procedure{$\mathtt{HDGP}$}{$\wt{D}, \wt{\mathcal{O}}, d$}
\\ \hspace*{\algorithmicindent} \textbf{Input}: Incomplete and noisy distance matrix, $\wt{D}$, and ordinal measurements, $\wt{\mathcal{O}}$, and embedding dimension, $d$.
\State $G = \mathtt{SDR}(\wt{D}, \wt{\mathcal{O}}, d)$ \Comment{Complete \& denoise HDM.}
\State $X = \mathtt{Embed}(G,d)$ \Comment{Embed points in $\bL^d$.}
\State For $X = [x_1, \ldots, x_N] \in \left(\bL^{d} \right)^N$, let 
\begin{equation*}
y_n = h(x_n), ~ \forall n \in [N]
\end{equation*}
where $h(\cdot)$ is given by \eqref{eq:h_x}.  \Comment{Map the points to $\bI^d$.}
\State \textbf{return} $Y = [y_1, \cdots, y_N] \in \left(\bI^{d} \right)^N$.
\EndProcedure
\end{algorithmic}
\end{algorithm}

Let $\wt{D}$ be the measured noisy and incomplete HDM, with unknown entries replaced by zeroes. We define the mask matrix $W = (w_{ij})$ as
\begin{equation*}
    w_{ij} \bydef
    \begin{cases}
       1, \ \text{for} ~ (i,j) \in \mathcal{C}  \vee (j,i) \in \mathcal{C}\\
       0,\ \text{otherwise.}
    \end{cases}.
\end{equation*}
This mask matrix lets us compute the loss only at those entries that were actually measured. We use the semidefinite characterization of hyperbolic Gramians in \Cref{prop:char_hyper_gram} to complete and denoise the measured HDM, and eventually solve HDGP. 

Although the set of hyperbolic Gramians for a given embedding dimension is non-convex due to the rank constraints, discarding the rank constraints results in a straightforward semidefinite relaxation.
\begin{algorithm}[b]
\caption{Semidefinite relaxation for HDGP}\label{alg:sdp}
\begin{algorithmic}[1]
\Procedure{$\mathtt{SDR}$}{$\wt{D}, \wt{\mathcal{O}},d$}
\\ \hspace*{\algorithmicindent} \textbf{Input}: Incomplete and noisy distance matrix, $\wt{D}$, and ordinal measurements, $\wt{\mathcal{O}}$, and embedding dimension, $d$.
\State Let $W$ be the measurement mask.
\State For small $\epsilon_1, \epsilon_2 > 0$, solve for $G$:
\begin{align*}
& \text{minimize}       & &  \mathrm{Tr} \ G^{+} + \mathrm{Tr} \ G^{-}  &  \\
& \text{w.r.t}       & &  G^{+}, G^{-}  \succeq 0 &  \\
& \text{subject to}   &&  G = G^{+}-G^{-},  \\
&                   	& & \diag{G} = -1, & \\
&                   	& & e_i^\T G e_j \leq -1, & \forall i,j \in [N]\\
&                   	& & \norm{ W \circ \big( \mathrm{cosh}[\wt{D}] + G\big) }_{F}^{2} \leq  \epsilon_1, \\
&                   	& & \mathcal{L}_k(G) \geq \epsilon_2, & \forall k \in \wt{\mathcal{O}}. 
\end{align*}
\State \textbf{return} $G$.
\EndProcedure
\end{algorithmic}
\end{algorithm}
\begin{table*}[t]
\centering
\caption{Examples of specialized HDGP objectives.}
\begin{tabular}[t]{lll}
\hline
\hfil Cost function & \hfil Parameters & \hfil Applications \\
\hline
\rowcolor{Gray} & $\sbullet[.75]$ $W_k^{+} =I$, $ W_k^{-} = I$ &   \\
\rowcolor{Gray}
 & $\sbullet[.75]$ $W_k^{+} =(G_k^{+} + \delta I)^{-1}$, $ W_k^{-} = (G_k^{-} + \delta I)^{-1}$ &  \\
\rowcolor{Gray} \multirow{-3}{*}{$\mathrm{Tr} \ W_k^{+}G^{+} + \mathrm{Tr} \ W_k^{-}G^{-}$}   & $\sbullet[.75]$ $W_k^{+} = I-P_{d}(G_k^{+})$, $ W_k^{-} = I-P_{1}(G_k^{-})$ & \multirow{-3}{*}{Low-rank hyperbolic embedding \cite{jawanpuria2019low,fazel2002matrix,fornasier2011low,fazel2003log}} \\
\multirow{2}{*}{$\mathrm{Tr} \ G^{+} + \mathrm{Tr} \ G^{-}+\sum_{k}p_k\epsilon_k$}  &  $\sbullet[.75]$ $p_k = 1$ &  Ordinal outlier removal \cite{olsson2010outlier,seo2009outlier,yu2014adversarial}, \\
&$\sbullet[.75]$  $\sum_{k}p_k = M, 0 \leq p_k \leq 1$ & Robust hierarchical embedding \cite{nickel2017poincare,ma2019robust} 
\\
\rowcolor{Gray} $\mathrm{Tr} \ G^{+} + \mathrm{Tr} \ G^{-}+\lambda \norm{C}_{1,2}$ &  $\sbullet[.75]$ $\norm{ \mathrm{cosh}[\wt{D}] + G^{+}-G^{-}+C }_{F} \leq \epsilon$ & Anomaly detection in  weighted graphs \cite{akoglu2010oddball} and networks \cite{yu2018netwalk}
\\
\hline
\end{tabular}  \label{tab:HDGP_setups}
\end{table*}%
However, if we convexify the problem by simply discarding the rank constraints, then all pairs $(G_1, G_2) \in \set{(G^{+}+P, G^{-}+P): P \succeq 0}$ become a valid solution. On the other hand, since 
\begin{equation*}
\mathrm{rank} \ G + P \geq \mathrm{rank} \ G ~ \mbox{ for } ~ G,P \succeq 0,
\end{equation*}
we can eliminate this ambiguity by promoting low-rank solutions for $G^{+}$ and $ G^{-}$. While directly minimizing
\begin{equation}\label{eq:low_rank_objective}
\mathrm{rank} \ G^{+} + \mathrm{rank} \ G^{-}
\end{equation}
is NP-hard \cite{vandenberghe1996semidefinite}, there exist many approaches to make  \eqref{eq:low_rank_objective} computationally tractable, such as trace norm minimization \cite{mishra2013low}, iteratively reweighted least squares minimization \cite{fornasier2011low}, or the log-det heuristic \cite{fazel2003log} that minimizes the following smooth surrogate for \eqref{eq:low_rank_objective}:
\begin{equation*}
\log \det (G^{+}+\delta I) + \log \det ( G^{-} + \delta I),
\end{equation*}
where $\delta > 0$ is a small regularization constant. This objective function is linearized as $C + \mathrm{Tr} \ W_k^{+}G^{+} + \mathrm{Tr} \ W_k^{-} G^{-}$ for $W_k^{+} = (G_k^{+} + \delta I)^{-1}$ and $W_k^{-} = (G_k^{-} + \delta I)^{-1}$, which can be iteratively minimized\footnote{In practice, we choose a diminishing sequence of $\delta_k$.}. In our numerical experiments we will uset he trace norm minimization unless otherwise stated. Then, we enforce the data fidelity objectives and the properties of the embeddings space (Proposition \ref{prop:char_hyper_gram}) in the form of a variety of constraints.

{\bf Metric embedding:} The quadratic constraint 
\begin{equation*}
\norm{ W \circ \big( \mathrm{cosh}[\wt{D}] + G\big) }_{F}^{2} \leq  \epsilon_1
\end{equation*}
makes sure the hyperbolic Gramian, $G$, accurately reproduces the given distance data.

{\bf Non-metric embedding:} The ordinal measurement constraint of
\begin{equation*}
d(x_{i_1}, x_{i_2}) \leq d(x_{i_3}, x_{i_4}), 
\end{equation*}
is simply a linear constraint in form of
\begin{equation*}
\mathcal{L}_i(G) =e_{i_1}^{\T} G e_{i_2} - e_{i_3}^{\T} G  e_{i_4}\geq 0
\end{equation*}
where $i \in \mathcal{O}$ and $i = (i_1, i_2, i_3, i_4)$. In practice, we replace this constraint by $\mathcal{L}_i(G) \geq \epsilon_2 > 0$ to avoid trivial solutions.

{\bf 'Loid model:} The unit hyperbolic norm appears as a simple linear constraint
\begin{equation*}
\diag{G} = -1,
\end{equation*}
which guarantees that the embedded points reside in sheets $\bL^d \cup -\bL^d$. Finally, $e_i^\T G e_j \leq -1$ enforces all embedded points to belong to the same hyperbolic sheet, i.e. $x_n \in \bL^d$ for all $n \in [N]$.

This framework can serve as a bedrock for multitude of other data fidelity objectives. We can seamlessly incorporate \textit{outlier removal} schemes by  introducing slack variables into the objective function and constraints \cite{olsson2010outlier,seo2009outlier,yu2014adversarial}. For example, the modified objective function
\begin{equation*}
\mathrm{Tr} \ G^{+} + \mathrm{Tr} \ G^{-}+\sum_{k}\epsilon_k
\end{equation*}
can be minimized subject to $\mathcal{L}_k(G) +\epsilon_k \geq 0$ and $\epsilon_k \geq 0$ as a means of removing outlier comparisons (we allow some comparisons to be violated; see \Cref{sec:odor_embedding} for an example).

We can similarly implement outlier detection in metric embedding problems. As an example, we can adapt the outlier pursuit algorithm \cite{xu2010robust}. Consider the measured $H$-Gramian of a point set with a few outliers
\begin{equation*}
\hat{G} = G+ C + N
\end{equation*}
where $G$ is outlier-free hyperbolic Gramian, $C$ is a matrix with only few non-zero columns and $N$ represents the measurement noise. Outlier pursuit aims to minimize a convex surrogate for 
\begin{equation*}
\rank{G} + \lambda \norm{C}_{0,c} ~ \mbox{s.t.} ~ \norm{\hat{G} - G - C}_{F}^2 \leq \epsilon
\end{equation*}
where $\norm{C}_{0,c}$ is the number of non-zero columns of $C$; more details and options are given in \Cref{tab:HDGP_setups}. 

We note that scalability of semidefinite programs has been studied in a number of recent works \cite{majumdar2019recent}, for example based on sketching \cite{yurtsever2019scalable,yurtsever2017sketchy}. 

\subsection{Low-rank Approximation of H-Gramians}
From \Cref{prop:char_hyper_gram}, it is clear that the rank of a hyperbolic Gramian of points in $\bL^d$ is at most $d+1$. However, the H-Gramian estimated by the semidefinite relaxation in \Cref{alg:sdp} does not necessarily have the correct rank.
Therefore, we want to find its best rank-$(d+1)$ approximation, namely $\hat{G}$, such that
\begin{equation}\label{eq:low_rank_hyperbolic_approx}
\norm{G - \hat{G}}_{F}^2 = \inf_{X \in {\left( \bL^{d} \right)}^N} \norm{G - X^\T H X}_{F}^2.
\end{equation}
In Algorithm \Cref{alg:RP} we propose a simple but suboptimal procedure to solve this low-rank approximation problem. Unlike iterative refinement algorithms based on optimization on manifolds \cite{jawanpuria2019low}, our proposed method is one-shot. It is based on the spectral factorization of the the estimated hyperbolic Gramian and involves the following steps:
\begin{itemize}
\item {\bf Step 1:} We find a set of points $\set{z_n}$ in $\R^{d+1}$, whose Lorentz Gramian best approximates $G$; See \Cref{def:lorentz_gramain} and lines $2$ to $5$ of \Cref{alg:RP}. In other words, we relax the optimization domain of \eqref{eq:low_rank_hyperbolic_approx} from $\bL^d$ to $\R^{d+1}$, 
\begin{equation*}
Z = \argmin_{X \in \R^{(d+1)\times N}}\norm{G - X^\T H X}^2.
\end{equation*}
\item {\bf Step 2:} We project each point $z_n$ onto $\bL^d$, i.e.
\begin{equation*}
\hat{X} = \argmin_{X \in \left(\bL^{d}\right)^N} \norm{X - Z}_{F}^{2}.
\end{equation*}
This gives us an approximate rank-$(d+1)$ hyperbolic Gramian, $\hat{G} = \hat{X}^\T H \hat{X}$; see \Cref{fig:tangent} and \Cref{sec:projection}.
\end{itemize}

\begin{algorithm}[b]
\caption{Low-rank approximation and spectral factorization of hyperbolic Gramian}\label{alg:RP}
\begin{algorithmic}[1]
\Procedure{$\mathtt{Embed}$}{$G, d$}
\\ \hspace*{\algorithmicindent} \textbf{Input}: Hyperbolic Gramian $G$, and embedding dimension $d$.
\State Let $U^\T \Lambda U$ be eigenvalue decomposition of $G$, where $\Lambda = \diag{(\lambda_0, \cdots, \lambda_{N-1})}$ such that 
\begin{itemize}
\item $\lambda_0 = \min_{i} \lambda_i$,
\item $\lambda_i$ is the top $i$-th element of $\set{\lambda_i}$ for $i \in [N]-1$.
\end{itemize}
\State Let $G_{d+1} = U_d^\T \Lambda_d U_d$, where 
\begin{equation*}
\Lambda_d = \diag{\big(\lambda_0, u(\lambda_1), \cdots, u(\lambda_d) \big)},
\end{equation*}
$u(x) = \max \set{x, 0}$, and $U_d$ be the corresponding sliced eigenvalue matrix.
\State  $Z = R|\Lambda_d|^{1/2} U_d^\T$, for any H-unitary matrix $R$.
\State For $Z = [z_1, \ldots, z_N]$, let 
\begin{equation*}
x_n = \mathtt{Project}(z_n), ~ \forall n \in [N]
\end{equation*}
\State \textbf{return} $X = [x_1, \ldots, x_N] \in \left(\bL^{d}\right)^N$.
\EndProcedure
\end{algorithmic}
\end{algorithm}

The first step of low-rank approximation of a hyperbolic Gramian $G$ can be interpreted as finding the positions of points in $\R^{d+1}$ (not necessarily on $\bL^d$) whose Lorentz Gramian best approximates $G$. 

\subsection{Spectral Factorization of H-Gramians} \label{sec:specfac}

To finally compute the point locations, we describe a spectral factorization method, proposed in \cite{wilson2014spherical} (cf. footnote \ref{fn1}), 
, to estimate point positions from their Lorentz Gramian (line $5$ of \Cref{alg:RP}). This method exploits the fact that Lorentz Gramians have only one non-positive eigenvalue (see \Cref{lem:lorentz_gram_char} in the appendix) as detailed in the following proposition.
\begin{proposition}\label{prop:spectral_factorization}
Let $G$ be a hyperbolic Gramian for $X \in \left( \bL^d \right)^N$, with eigenvalue decomposition $G = U \Lambda U^\T$, and eigenvalues $\lambda_0 \leq 0 \leq \lambda_1 \leq \ldots \leq \lambda_d$.\footnote{An H-Gramian is a Lorentz Gramian.} Then, there exists an $H$-unitary matrix $R$ such that $X = R |\Lambda|^{1/2} U$.
\end{proposition}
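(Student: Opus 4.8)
The plan is to produce one explicit Lorentzian factor of $G$ directly from its spectrum, and then to show that \emph{every} factor of the form $X^\T H X = G$ is obtained from this canonical one by left-multiplication with an $H$-unitary matrix; the construction of the first factor is essentially a signed matrix square root, and the uniqueness-up-to-$H$-unitarity is the Lorentzian analogue of the familiar ``$A^\T A = B^\T B \Rightarrow A = QB$ for orthogonal $Q$'' statement.

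First I would write the reduced eigenvalue decomposition as $G = U^\T \Lambda U$ with $U \in \R^{(d+1)\times N}$ having orthonormal rows ($U U^\T = I_{d+1}$) and $\Lambda = \diag{(\lambda_0, \lambda_1, \ldots, \lambda_d)}$. Here I invoke \Cref{lem:lorentz_gram_char}: a Lorentz Gramian has exactly one non-positive eigenvalue, so (in the full-rank case) the inertia of $\Lambda$ matches that of $H$, namely one negative entry $\lambda_0<0$ and $d$ positive entries $\lambda_1,\ldots,\lambda_d>0$. Consequently the diagonal sign pattern $\mathrm{sgn}(\Lambda)$ coincides with the diagonal of $H=\diag{(-1,1,\ldots,1)}$, which gives the key identity $|\Lambda|^{1/2}\,H\,|\Lambda|^{1/2} = \Lambda$. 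Setting the canonical factor $X_0 = |\Lambda|^{1/2} U$ and substituting yields $X_0^\T H X_0 = U^\T |\Lambda|^{1/2} H |\Lambda|^{1/2} U = U^\T \Lambda U = G$, so $X_0$ is a Lorentzian factor of $G$ of the advertised form.

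Next I would relate an arbitrary factor $X$ (with $X^\T H X = G$) to $X_0$. The step I would isolate is that both share a row space: since $H$ is invertible and $X$ has full row rank, one checks $\ker G = \ker X$, hence $\mathrm{range}(G) = \mathrm{range}(X^\T)$, which is the row space of $X$; the same computation gives $\mathrm{range}(G) = \mathrm{range}(X_0^\T)$. The rows of $X_0$ therefore form a basis of the common $(d+1)$-dimensional space $\mathrm{range}(G)$, so there is a unique matrix $R \in \R^{(d+1)\times(d+1)}$ with $X = R X_0$, and $R$ is invertible because the rows of $X$ are a basis of the same space. Finally I substitute into $G = X^\T H X = X_0^\T (R^\T H R) X_0$ and subtract $G = X_0^\T H X_0$ to get $X_0^\T (R^\T H R - H) X_0 = 0$; as $X_0$ has full row rank the map $v \mapsto X_0 v$ is onto $\R^{d+1}$, forcing $R^\T H R - H = 0$. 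Thus $R^\T H R = H$, i.e. $R^{[*]} = H^{-1}R^\T H = R^{-1}$ by \eqref{eq:H_adjoint_formula}, so $R$ is $H$-unitary and $X = R X_0 = R|\Lambda|^{1/2}U$.

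The main obstacle I anticipate is the inertia/sign bookkeeping in the first step: the identity $|\Lambda|^{1/2}H|\Lambda|^{1/2}=\Lambda$ only holds because $G$ has precisely one negative and $d$ (strictly) positive eigenvalues, and this is exactly what \Cref{lem:lorentz_gram_char} must guarantee; the same strict positivity is what keeps $|\Lambda|^{1/2}$ invertible, ensures $X_0$ has full row rank, and hence makes $R$ well-defined and invertible. The remaining row-space and surjectivity arguments are routine linear algebra once full rank is in hand.
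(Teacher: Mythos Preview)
Your proposal is correct and follows essentially the same route as the paper's proof outline: both verify that $|\Lambda|^{1/2}U^\T$ is a Lorentzian factor of $G$ via the sign identity $|\Lambda|^{1/2}H|\Lambda|^{1/2}=\Lambda$ (the paper writes this as $|\lambda_0|^{1/2}(-1)|\lambda_0|^{1/2}=\lambda_0$), and both then argue that any other factor---in particular the given $X$---differs from this canonical one by an $H$-unitary matrix. The only difference is that the paper dispatches the second step by citing that $H$-unitary operators characterize isometries in the 'Loid model, whereas you supply a self-contained linear-algebraic argument (matching row spaces and cancelling the full-row-rank $X_0$); your version is more explicit but the substance is the same.
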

The proof is given in \Cref{sec:proof_p2}. Note that regardless of the choice of $R$, $X = R |\Lambda|^{1/2} U$ will reproduce $G$ and thus the corresponding distances. This is the rigid motion ambiguity familiar from the Euclidean case \cite{cannon1997hyperbolic}. If we start with an $H$-Gramian with a wrong rank, we need to follow the spectral factorization by Step 2 where we project each point $z_n \in \R^{d+1}$ onto $\bL^d$. This heuristic is suboptimal, but it is nevertheless appealing since it only requires a single one-shot calculation as detailed in \Cref{sec:projection}.

\begin{figure}[t]
\center
\includegraphics[width=.6 \linewidth]{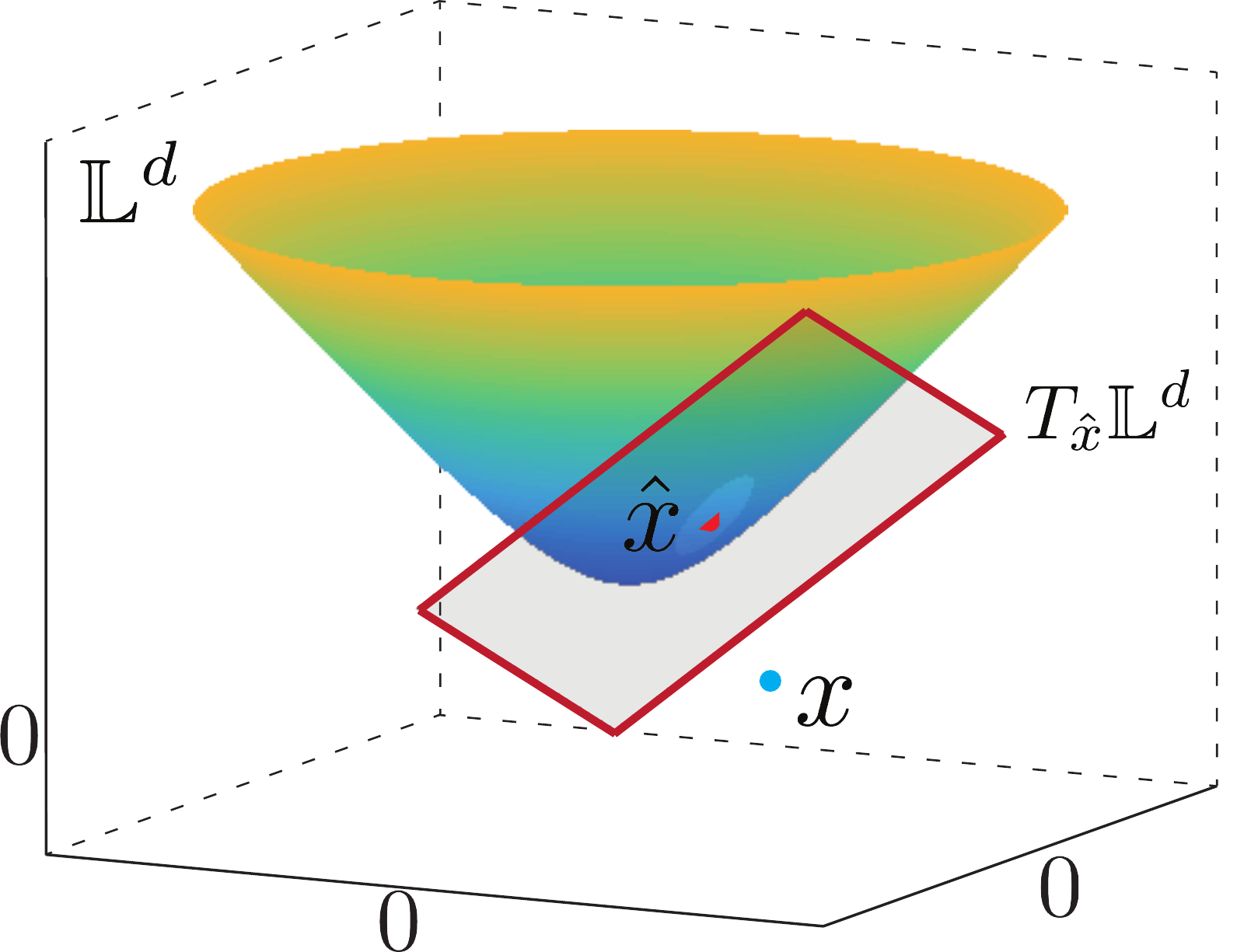}
\caption{Projecting a point  in $\R^{d+1}$ (blue) to $\bL^d$ (red).}
\label{fig:tangent}
\end{figure}

\section{Experimental Results} \label{sec:experimental_results}
In this section we numerically demonstrate different properties of \Cref{alg:hdgp} in solving HDGPs. In a general hyperbolic embedding problem, we have a mix of metric and non-metric distance measurements which can be noisy and incomplete. 
Code, data and documentation to reproduce the experimental results are available at \url{https://github.com/puoya/hyperbolic-distance-matrices}.

\subsection{Missing Measurements} \label{sec:sparse_measurements}
Missing measurements are a common problem in hyperbolic embeddings of concept hierarchies. For example, hyperbolic embeddings of words based on Hearst-like patterns rely on co-occurrence probabilities of word pairs in a corpus such as WordNet \cite{miller1998wordnet}. These patterns are sparse since word pairs must be detected in the right configuration \cite{le2019inferring}. In perceptual embedding problems, we ask individuals to rate pairwise similarities for a set of objects. It may be difficult to collect and embed all pairwise comparisons in applications with large number of objects \cite{agarwal2007generalized}.
\begin{figure*}[t]
\center
 \includegraphics[width=.77 \linewidth]{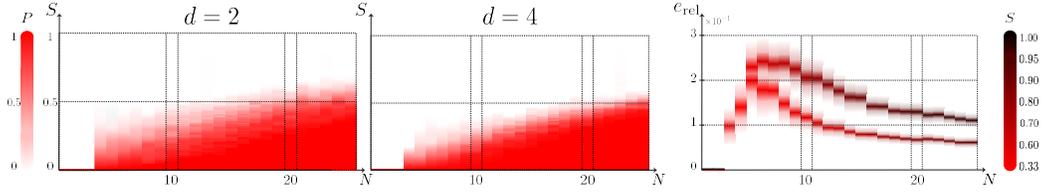}
  \caption{Left and middle: The probability of $\delta$-accurate estimation for metric sampling density $S$, $M=100$, and $\delta = 10^{-2}$. Right: The empirical error $e_{\mathrm{rel}} =\mathbb{E}_{K}[e_{\mathrm{rel}}(X)]$ for ordinal sampling density $S$, $d=2$, $M=50$, and $K = 10$. In each bar, shading width represents the empirical standard deviation of $e_{\mathrm{rel}}(X)$.}
  \label{fig:sparsity}
\end{figure*}

The proposed semidefinite relaxation gives a simple way to handle missing measurements. The \textit{metric sampling density} $0 \leq S \leq 1$ of a measured HDM is the ratio of the number of missing measurements to total number of pairwise distances, $S = 1 - \frac{|\mathcal{D}|}{|\mathcal{D}_c|}$.
We want to find the probability $p(S)$ of successful estimation given a sampling density $S$. In practice, we fix the embedding dimension, $d$, and the number of points, $N$, and randomly generate a point set, $X \in \left(\bL^{d}\right)^N$. A trial is successful if we can solve the HDGP for noise-free measurements and a random mask $W$ of a fixed size so that the estimated hyperbolic Gramian has a small relative error, 
\(
e_{\mathrm{rel}}(\hat{G}) = \frac{\norm{\mathcal{D}(X) - \mathrm{acosh}[-\hat{G}]}_{F}}{\norm{\mathcal{D}(X)}_{F}} \leq \delta.
\)
We repeat for $M$ trials, and empirically estimate the success probability as $\hat{p}(S) = \frac{M_s}{M}$ where $M_s$ is the number of successful trials. We repeat the experiment for different values of $N$ and $d$, see  \Cref{fig:sparsity}. 

For non-metric embedding applications, we want to have \textit{consistent} embedding for missing ordinal measurements. The \textit{ordinal sampling density} $0 \leq S \leq 1$ of a randomly selected set of ordinal measurements is defined as $S = 1 - \frac{|\mathcal{O}|}{|\mathcal{O}_c|}$. For a point set $X \in \left( \bL^{d} \right)^N$, we define the average relative error of estimated HDMs as
\(
e_{\mathrm{rel}}(X) = \mathbb{E}_{M}\frac{\norm{D_{\mathcal{O}}-\mathbb{E}_{M}[D_{\mathcal{O}}]}_F}{ \norm{\mathbb{E}_{M}[D_{\mathcal{O}}]}_F}
\)
where $D_{\mathcal{O}}$ is the estimated HDM for ordinal measurements $\mathcal{O}$, and empirical expectation is with respect to the random ordinal set $\mathcal{O}$. We repeat the experiment for $K$ different realizations of $X \in \left(\bL^{d}\right)^N$ (\Cref{fig:sparsity}). We can observe that across different embedding dimensions, the maximum allowed fraction of missing measurements for a consistent and accurate estimation increases with the number of points. 

\subsection{Weighted Tree Embedding} \label{sec:tree_embedding}
Tree-like hierarchical data occurs commonly in natural scenarios. In this section, we want to compare the embedding quality of weighted trees in hyperbolic and the baseline in Euclidean space. 

We generate a random tree $T$ with $N$ nodes, maximum degree of $\Delta(T) = 3$, and i.i.d. edge weights from $\mathrm{unif}(0,1)$\footnote{The most likely maximum degree for trees with $N \leq 25$ \cite{moon1968maximum}.}. Let $D_T$ be the distance matrix for $T$, where the distance between each two nodes is defined as the weight of the path joining them.

For the hyperbolic embedding, we apply \Cref{alg:sdp} with log-det heuristic objective function to acquire a low-rank embedding. On the other hand, Euclidean embedding of $T$ is the solution to the following semidefinite relaxation
\begin{align} \label{eq:sdp_euclidean}
& \text{minimize}       & &  \norm{ D_T^{\circ 2} - \mathcal{K}(G) }_{F}^{2}  &  \\
& \text{w.r.t}       & &   G  \succeq 0 \nonumber &  \\
& \text{subject to}   &&  G1 = 0\nonumber
\end{align}
where $\mathcal{K}(G) = -2G + \mathrm{diag}(G)1^\T + 1\mathrm{diag}(G)^\T$, and $D_T^{\circ 2}$ is the entrywise square of $D_T$. This semidefinite relaxation (SDR) yields a \textit{minimum error} embedding of $T$, since the embedded points can reside in an arbitrary dimensional Euclidean space.
\begin{figure}[b] \label{fig:metric_tree_embedding}
\center
 \includegraphics[width=.80 \linewidth]{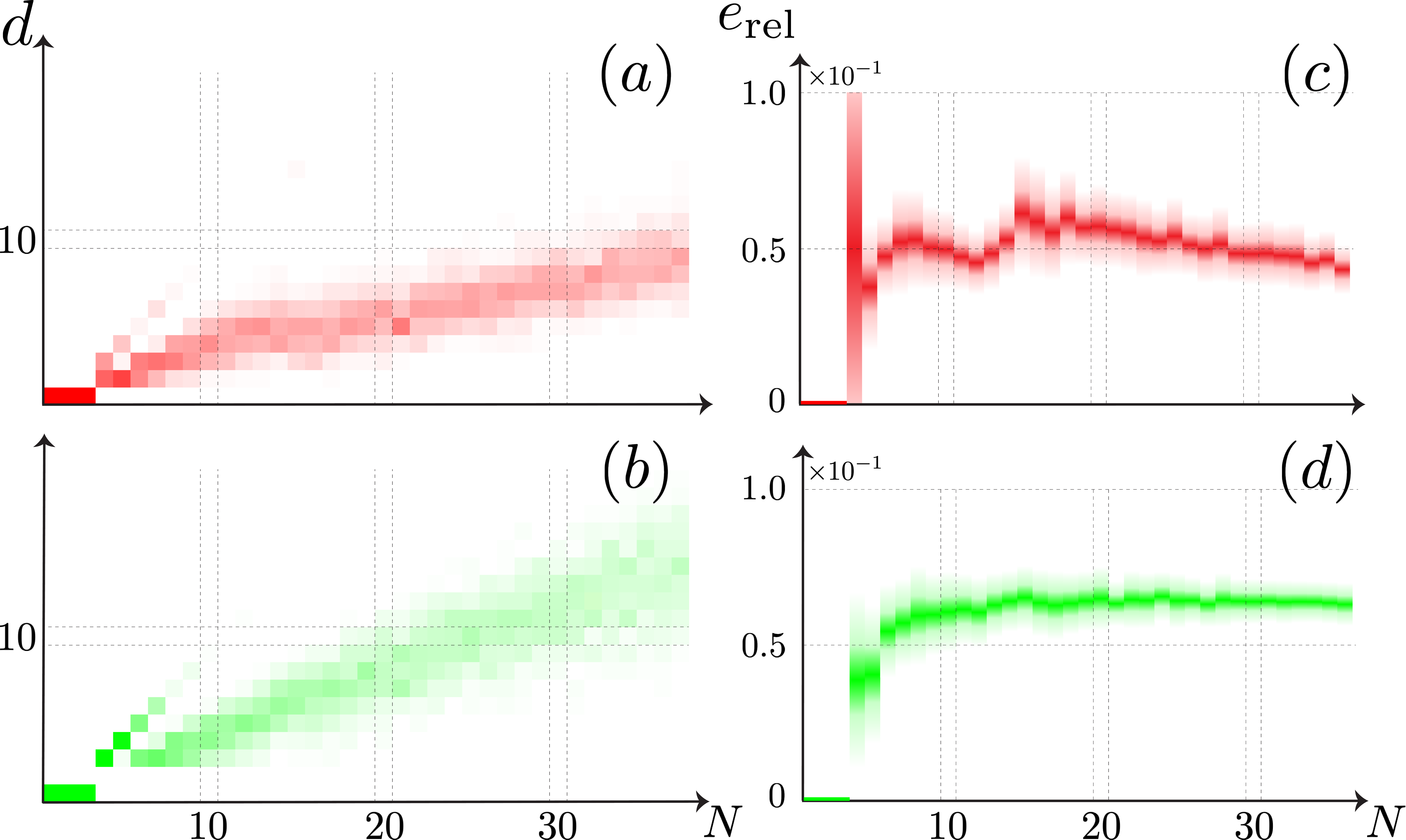}
  \caption{Tree embedding in hyperbolic (red) and Euclidean (green) space. Discrete distribution of optimal embedding dimension for $M=100$, $(a)$ and $(b)$. Average, $\mathbb{E}_{M} [e_{\mathrm{rel}}(T)]$, and standard deviation of embedding error, $(c)$ and $(d)$.}
  \label{fig:metric_tree_embedding}
\end{figure}

The embedding methods based on semidefinite relaxation are generally accompanied by a projection step to account for the potentially incorrect embedding dimension. For hyperbolic embedding problems, this step is summarized in \Cref{alg:RP}, whereas it is simply a singular value thresholding of the Gramian for Euclidean problems. Note that the SDRs always find a $(N-1)$-dimensional embedding for a set of $N$ points; see \Cref{alg:sdp} and \eqref{eq:sdp_euclidean}. In this experiment, we define the optimal embedding dimension as
\begin{equation*}
d_0 = \min \set{d \in \mathbb{N}: \frac{\norm{D_{N-1}-D_{d}}_{F}}{\norm{D_{N-1}-D_{d+1}}_{F}} \geq 1-\delta}
\end{equation*}
where $D_n$ is the distance matrix for embedded points in $\bL^n$ (or $\R^n$), and $\delta = 10^{-3}$. This way, we accurately represent the estimated distance matrix in a low dimensional space.
Finally, we define the relative (or normalized) error of embedding $T$ in $d_0$-dimensional space as
\(
e_{\mathrm{rel}}(T) = \frac{\norm{D_{T}-D_{d_0}}_{F}}{\norm{D_{T}}_{F}}.
\)
We repeat the experiment for $M$ randomly generated trees $T$ with a varying number of vertices $N$. The hyperbolic embedding yields smaller average relative error $\mathbb{E}_{M}[e_{\mathrm{rel}}(T)]$ compared to Euclidean embedding, see \Cref{fig:metric_tree_embedding}. It should also noted that the hyperbolic embedding has a lower optimal embedding dimension, even though the low-rank hyperbolic Gramian approximation is sub-optimal.

\subsection{Odor Embedding} \label{sec:odor_embedding}

In this section, we want to compare hyperbolic and Euclidean non-metric embeddings of olfactory data following the work of Zhou et al. \cite{zhou2018hyperbolic}. We conduct identical experiments in each space, and compare embedding quality of points from \Cref{alg:sdp} in hyperbolic space to its semidefinite relaxation counterpart in Euclidean space, namely generalized non-metric MDS  \cite{agarwal2007generalized}.

We use an olfactory dataset comprising mono-molecular odor concentrations measured from blueberries  \cite{gilbert2015identifying}. In this dataset, there are $N = 52$ odors across the total of $M = 164$ fruit samples.
\begin{figure}[t] 
\center
   \includegraphics[width=.73 \linewidth]{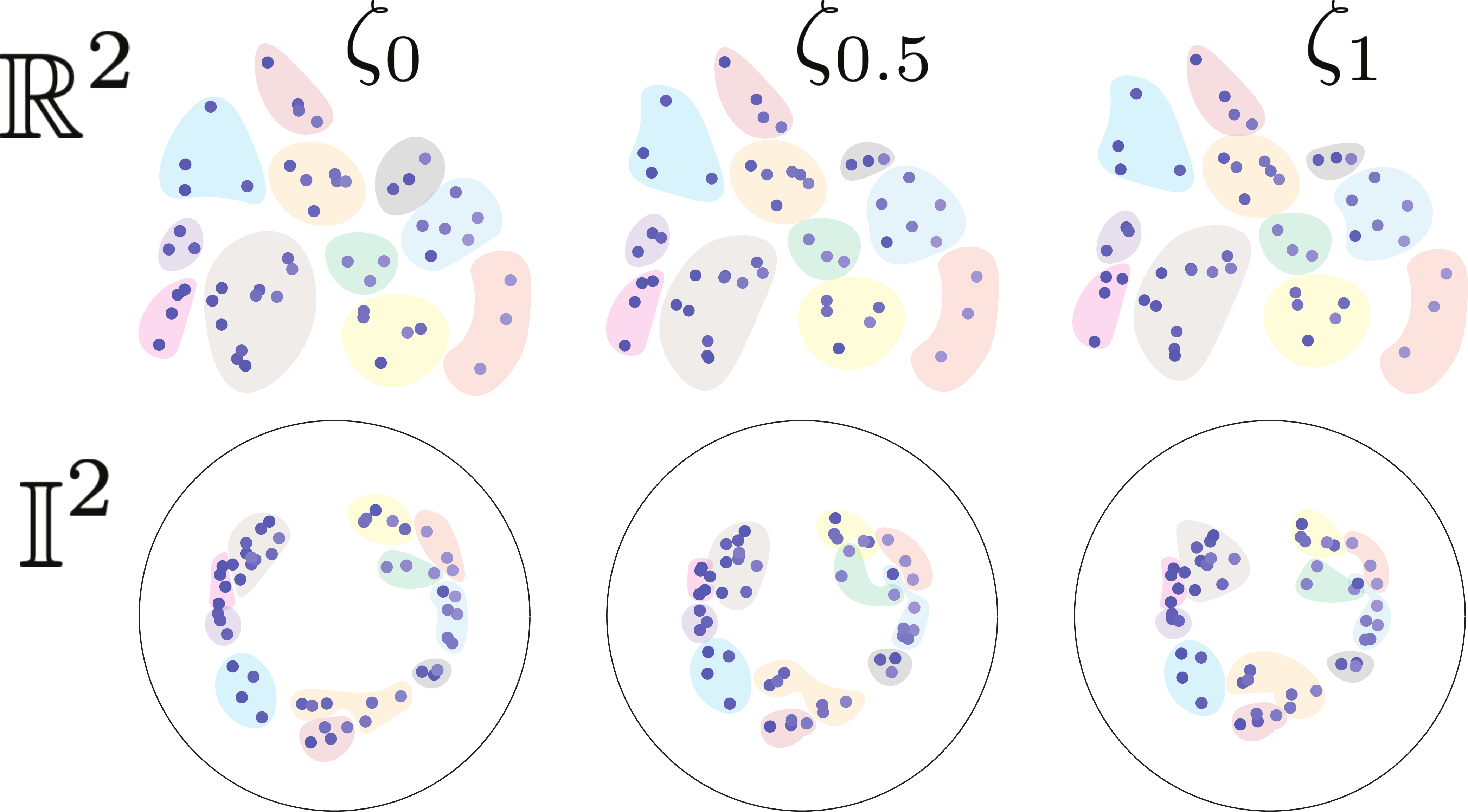}
  \caption{Embedding of odors for different levels of allowable violated measurements $\zeta_p$. Clusters with the matching colors contain the same odors.} \label{fig:poincare_odor}
\end{figure}
Like Zhou \textit{et al.} \cite{zhou2018hyperbolic}, we begin by computing correlations between odor concentrations across samples \cite{zhou2018hyperbolic}. The correlation coefficient between two odors $x_i$ and $x_j$ is defined as 
\begin{equation*}
C(i,j) = \frac{(\vx_i-\mu_{x_i}1)^\T (\vx_j-\mu_{x_j}1)}{ \norm{\vx_i -\mu_{x_i}1} \norm{\vx_j -\mu_{x_j}1}}
\end{equation*}
where $\vx_n = (x_n^{(1)}, \ldots, x_n^{(M)})^\T$, $x_i^{(m)}$ is the concentration of $i$-th odor in $m$-th fruit sample, $M$ is total number of fruit samples and $\mu_{x_n} = \frac{1}{M}\sum_{m=1}^{M} x_n^{(m)}$. 

The goal is to find an embedding for odors $y_1, \ldots, y_N  \in \bI^d$ (or $\R^d$) such that
\begin{equation*}
d(y_{i_1}, y_{i_2}) \leq d( y_{i_3}, y_{i_4}), ~~ (i_1, i_2, i_3,i_4) \in \mathcal{O},
\end{equation*}
where,
\begin{equation*}
\mathcal{O} \subseteq \mathcal{O}_c = \set{(i_1,i_2,i_3,i_4) \in \left([N]_{\mathrm{as}}^2\right)^2: C(i_1,i_2) \geq C(i_3,i_4)}.
\end{equation*}
The total number of distinct comparisons grows rapidly with the number of points, namely $|\mathcal{O}_c| =  0.87$ million. In this experiment, we choose a random set of size $|\mathcal{O}| = 2K {N \choose 2}$ for $K=4$ to have the sampling density of $S = 98.79\%$\footnote{In hyperbolic embedding, this is the ratio of number of ordinal measurements to number of variables, i.e. $K = \frac{|\mathcal{O}|}{ 2{N \choose 2} }$.}, which brings the size of ordinal measurements to $|\mathcal{O}| \approx 10^4$.

We ensure the embedded points do not collapse by imposing the following minimum distance constraint $d(x_i, x_j) \geq 1$ for all $(i, j) \in [N]^{2}_{\mathrm{as}}$; this corresponds to a simple linear constraint in the proposed formulation. An ideal order embedding accurately reconstructs the missing comparisons. We calculate the percentage of correctly reconstructed distance comparisons as $\gamma_{d} = |\widehat{\mathcal{O}}_{c,d} \cap \mathcal{O}_c| / |\mathcal{O}_c| $, where $\widehat{\mathcal{O}}_{c,d}$ is the complete ordinal set corresponding to a $d$-dimensional embedding.

A simple regularization technique helps to remove outlier measurements and improve the generalized accuracy of embedding algorithms. We introduce the parameter $\zeta_p$ to permit SDR algorithms to dismiss at most $p$-percent of measurements, namely
\begin{equation*}
\mathcal{L}_{k}(G) + \epsilon_k \geq \epsilon_2~ \mbox{and} ~ \epsilon_k \geq 0, ~ \forall k \in \mathcal{O} ~ \mbox{and} ~ \sum_{k} \epsilon_k \leq \zeta_p 
\end{equation*}
where $\zeta_p= \frac{p}{100} |O| \epsilon_2$. 

\begin{table}[t]
\caption{Reconstruction accuracy of ordinal measurements $\gamma_d$ for different levels of allowable violation $\zeta_p$.} 
\centering
\begin{tabular}{l c  ccccc} \hline
 Space &     & $d=2$&$d=4$& $d=6$& $d=8$&$d=10$ \\ \hline
\rowcolor{Gray} \cellcolor{white} & \cellcolor{white} $\zeta_0$ &$76.06$&$83.60$&$86.87$&$89.48$&$91.03$ \\
 Hyperbolic & $\zeta_{0.5}$& $76.52$&$83.71$&$86.94$&$89.68$&$91.16$\\  \rowcolor{Gray}  \cellcolor{white} & \cellcolor{white}  $\zeta_1$ & $76.43$ & $83.71$ & $86.92$ & $89.76$ & $91.21$  \\ \hline 
& $\zeta_0$ &$73.44$ &$78.86$&$82.23$&$85.06$&$88.67$ \\ 
\rowcolor{Gray} \cellcolor{white}Euclidean & \cellcolor{white}  $\zeta_{0.5}$ &$73.27$ & $79.03$ & $82.65$ & $86.24$ & $88.98$ \\  
& $\zeta_{1}$ &$73.12$ & $78.92$ & $82.51$ & $86.01$ & $89.02$   \\
\hline 
\end{tabular}
\label{tab:odor_recon_acc}
\end{table}

In \Cref{fig:poincare_odor}, we show the embedded points in $\bI^2$ and $\R^2$ with different levels of allowable violated measurements. We can observe in \Cref{tab:odor_recon_acc} that hyperbolic space better represent the structure of olfactory data compared to Euclidean space of the same dimension. This is despite the fact that the number of measurements per variable is in favor of Euclidean embedding, and that the low rank approximation of hyperbolic Gramians is suboptimal. Moreover, if we remove a small number of outliers we can produce more accurate embeddings. These results corroborate the statistical analysis of Zhou \textit{et. al.} \cite{zhou2018hyperbolic} that aims to identify the geometry of the olfactory space. \footnote{Statistical analysis of Betti curve behavior of underlying clique topology \cite{giusti2015clique}.}  
\section{Conclusion}

We introduced hyperbolic distance matrices, an analogy to Euclidean distance matrices, to encode pairwise distances in the 'Loid model of hyperbolic geometry. Same as in the Euclidean case, although the definition of hyperbolic distance matrices is trivial, analyzing their properties gives rise to powerful algorithms based on semidefinite programming. We proposed a semidefinite relaxation which is essentially plug-and-play: it easily handles a variety of metric and non-metric constraints, outlier removal, and missing information and can serve as a template for different applications. Finally, we proposed a closed-form spectral factorization algorithm to estimate the point position from hyperbolic Gramians. Several important questions are still left open, most notably the role of the isometries in the 'Loid model and the related concepts such as Procrustes analysis.

\section*{Acknowledgement} 
\label{sec:ackowledgement}

We thank Lav Varshney for bringing our attention to hyperbolic geometry and for the numerous discussions about the manuscript.


\bibliographystyle{ACM-Reference-Format}
\bibliography{hdm}

\clearpage
\appendix
\section{Proof of Proposition \ref{prop:char_hyper_gram}}
\label{sec:proof_p1}

A hyperbolic Gramian can be written as $G = X^\T H X$ for a $X = [x_1, \ldots, x_N] \in \left( \bL^{d} \right)^N$. Let us rewrite it as
\begin{align*}
G &= \sum_{i=1}^{d} g_i g_i^\T  - g_0 g_0^\T \\
&= G^{+}-G^{-}
\end{align*}
where $g_i^\T$ is the $(i+1)$-th row of $X$, $G^{-} = g_0 g_0^\T$ and $G^{+} = \sum_{i=1}^{d} g_i g_i^\T$ are positive semidefinite matrices. We have $\rank{G^{-}} \leq 1$ and $\rank{G^{+}} \leq d$. On the other hand, we have
\begin{align*}
e_i^\T G e_j &\bydef [x_i, x_j] \\
&= -x_{0,i}x_{0,j} + \sum_{k=1}^{d} x_{k,i}x_{k,j} \\
&\stackrel{\text{(a)}}{=} -\sqrt{1+\norm{\bar{x}_i}^2} \sqrt{1+\norm{\bar{x}_j}^2} + \bar{x}_i^\T \bar{x}_j \\
&\stackrel{\text{(b)}}{\leq} -(1 +\bar{x}_i^\T \bar{x}_j ) + \bar{x}_i^\T \bar{x}_j  = -1.
\end{align*}
where $x_{k,i}$ is the $(k+1)$-th element of $x_i$, $\bar{x}_i = (x_{1,i}, \ldots, x_{d,i})^\T$, and $\text{(a)}$ is due to $\norm{x_i}_{H}^2 = \norm{x_j}_{H}^2 = -1$, and $\text{(b)}$ results from Cauchy-Shwartz inequality. The equality holds for $i = j$, which yields the $\diag{G} = -1$ condition.

Conversely, let $G = G^{+} - G^{-}$, where $G^{+}, G^{-} \succeq 0$, $\rank G^{-} \leq 1$, and $\rank G^{+} \leq d$. Let us write $G^{-} = g_0 g_0^\T$ and $G^{+}= \sum_{i=1}^{d} g_i g_i^\T$ for $g_0, \ldots, g_d \in \R^{N}$. Then, we define
\begin{equation*}
X \bydef  \left[ \begin{array}{c} g_0^\T \\ \vdots \\ g_d^\T \end{array}\right]  = [x_1, \cdots, x_N ] \in \R^{(d+1) \times N}.
\end{equation*}
where $x_n \in \R^{d+1}$ for all $n \in [N]$. By construction, we have $X^\T H X = G$, and
\begin{equation*}
\diag G = -1 \Rightarrow \norm{x_n}_{H}^{2} = -1, ~ \forall n \in [N].
\end{equation*}
Finally, $e_i^\T G e_j \leq -1$ guarantees that $x_n \in \bL^d$ for all $n \in [N]$. We prove the contrapositive statement. Let $x_i$ and $x_j$ belong to different the hyperbolic sheets, e.g. $x_i \in \bL^d, x_j \in -\bL^d$. Then,
\begin{align*}
e_i^\T G e_j &\bydef [x_i, x_j] \\
&= -x_{0,i}x_{0,j} + \sum_{k=1}^{d} x_{k,i}x_{k,j} \\
&\stackrel{\text{(a)}}{\geq} \sqrt{1+\norm{\bar{x}_i}^2} \sqrt{1+\norm{\bar{x}_j}^2} - \norm{\bar{x}_i} \norm{\bar{x}_j} \geq  0
\end{align*}
where $\text{(a)}$ is due to Cauchy-Shwartz inequality. This is in contradiction with $e_i^\T G e_j \leq -1$ condition. Therefore, $\set{x_n}$ belong to the same hyperbolic sheet, namely $\bL^d$.

\section{Derivations for Algorithm \ref{alg:RP}}

\begin{theorem}\label{prop:LGA}
Let $G \in \R^{N \times N}$ be a hyperbolic Gramian, with eigenvalue decomposition
\begin{equation}\label{eq:evd_G}
G = U^\T \Lambda U ,
\end{equation}
where $\Lambda = \diag{ ( \lambda_0, \cdots, \lambda_{N-1} )}$ such that 
\begin{itemize}
\item $\lambda_0 = \min_{i} \lambda_i$,
\item $\lambda_i$ is the $i$-th top element of $\set{\lambda_i}$ for $i \in \set{1, \cdots, d}$
\end{itemize}
The best rank-$(d+1)$ Lorentz Gramian approximation of $G$, in $\ell_2$ sense, is given by
\begin{equation*}
G_{d+1} = U_d^\T \Lambda_d U_d
\end{equation*}
where $\Lambda_d = \diag{[\lambda_0, u(\lambda_1), \cdots, u(\lambda_d)]}$, $u(x) = \max \set{x, 0}$, and $U_d \in \R^{(d+1) \times N}$ is the corresponding sliced eigenvalue matrix.
\end{theorem}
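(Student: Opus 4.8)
The plan is to recast the constrained least-squares problem $\inf_{Z \in \R^{(d+1)\times N}} \norm{G - Z^\T H Z}_F^2$ as an approximation over a well-understood set of matrices, and then reduce it to a scalar assignment problem on eigenvalues. First I would characterize the feasible set. Writing $Z = [z_0, \ldots, z_d]^\T$ with rows $z_i^\T$, we have $Z^\T H Z = \sum_{i=1}^d z_i z_i^\T - z_0 z_0^\T = A - B$ with $A \succeq 0$, $\rank A \le d$ and $B \succeq 0$, $\rank B \le 1$ (analogous to the splitting in \Cref{prop:char_hyper_gram}). I would then show that the set of such matrices is exactly the set of symmetric $N \times N$ matrices with at most $d$ positive and at most one negative eigenvalue. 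The inclusion is obtained from interlacing: $A - B \preceq A$ gives $\lambda_{d+1}(A-B) \le \lambda_{d+1}(A) = 0$ (at most $d$ positive eigenvalues), while Cauchy interlacing for a rank-one downdate gives $\lambda_{N-1}(A-B) \ge \lambda_N(A) \ge 0$ (at most one negative eigenvalue). The reverse inclusion is immediate by splitting a matrix into its positive and negative spectral parts.

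Next I would solve $\min_M \norm{G - M}_F^2$ over this inertia-constrained set. The key step is to show that an optimal $M$ is codiagonalizable with $G$. Expanding $\norm{G - M}_F^2 = \norm{G}_F^2 + \norm{M}_F^2 - 2\langle G, M\rangle$ and writing both operators in their eigenbases, the cross term we wish to maximize becomes $\sum_{i,j}\mu_i \nu_j \langle u_i, w_j\rangle^2$, where $\mu,\nu$ are the eigenvalues of $G,M$ and the array $\langle u_i, w_j\rangle^2$ is doubly stochastic. By Birkhoff's theorem together with the rearrangement inequality, this term is bounded by $\sum_i \mu_i^\downarrow \nu_i^\downarrow$, with equality when the eigenvectors of $M$ coincide, in matched sorted order, with those of $G$. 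Hence we may take $M = \sum_i \nu_i u_i u_i^\T$, for which $\norm{G - M}_F^2 = \sum_i (\mu_i - \nu_i)^2$ exactly.

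Finally I would solve the resulting scalar problem: minimize $\sum_i (\mu_i - \nu_i)^2$ subject to at most $d$ of the $\nu_i$ positive and at most one negative. For each index, setting $\nu_i = \mu_i$ annihilates the term but consumes a positive slot (if $\mu_i > 0$) or the negative slot (if $\mu_i < 0$); otherwise the best choice is $\nu_i = 0$. Maximizing the total reduction therefore dictates spending the $d$ positive slots on the $d$ largest positive eigenvalues and the single negative slot on the most negative eigenvalue. Because $G$ is a hyperbolic Gramian, \Cref{lem:lorentz_gram_char} guarantees it has exactly one non-positive eigenvalue, $\lambda_0 < 0$; so the negative slot is correctly assigned to $\lambda_0$ and the positive slots to $\lambda_1, \ldots, \lambda_d$, truncated through $u(x)=\max\set{x,0}$ in case fewer than $d$ are strictly positive. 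This reproduces exactly $G_{d+1} = U_d^\T \Lambda_d U_d$.

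The main obstacle I anticipate is the codiagonalization step: justifying rigorously that an optimal approximant shares the eigenvectors of $G$, and that the matching respects the signed inertia budget, requires the von Neumann / Birkhoff--rearrangement machinery rather than a direct Eckart--Young appeal, since the constraint is on signed inertia rather than on plain rank. Care is also needed at the boundary where $G$ has fewer than $d$ strictly positive eigenvalues, which is precisely the situation the operator $u(\cdot)$ is designed to handle.
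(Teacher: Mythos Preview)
Your argument is correct and is considerably more detailed than the paper's own proof, but it takes a different route. One discrepancy worth flagging: the theorem as stated is in the $\ell_2$ (operator) norm, while you work throughout in the Frobenius norm. The paper is itself inconsistent on this point---the main text \eqref{eq:low_rank_hyperbolic_approx} poses the problem in Frobenius norm---and the same truncation $G_{d+1}$ turns out to be optimal for both, so this is not a substantive gap; but your Birkhoff--rearrangement codiagonalization step is Frobenius-specific and would need a Mirsky-type replacement to cover the operator norm.

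On the comparison of approaches: the paper's proof consists almost entirely of establishing \Cref{lem:lorentz_gram_char} and then asserting the conclusion in one line. The implicit reasoning is an unconstrained Eckart--Young--Mirsky appeal: since $G$ is a hyperbolic Gramian, $\trace G=-N$ together with the lemma forces $|\lambda_0|=N+\sum_{i\ge1}\lambda_i>\lambda_1$, so $\lambda_0$ has the largest modulus and the plain rank-$(d{+}1)$ spectral truncation automatically retains $\lambda_0$ alongside $\lambda_1,\dots,\lambda_d$; this truncation then happens to satisfy the Lorentz inertia pattern, and optimality over the smaller feasible set follows a fortiori. Your route is genuinely different: you characterize the feasible set \emph{exactly} as the inertia cone $\{M=M^\T:\text{at most }d\text{ positive and at most one negative eigenvalue}\}$ (your interlacing argument for the forward inclusion is clean and correct), and then optimize over that set directly via von Neumann's trace inequality and a scalar slot-assignment. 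What this buys you is robustness: your proof does not depend on the trace identity $\trace G=-N$ and would go through for any symmetric target with a single negative eigenvalue, not just hyperbolic Gramians. The paper's shortcut is terser but leans on that structural coincidence. The obstacle you anticipated---rigorous codiagonalization under a signed-inertia rather than a plain-rank constraint---is real, and the machinery you propose handles it correctly.
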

\begin{proof}
We begin by characterizing the eigenvalues of a Lorentz Gramian.
\begin{lemma} \label{lem:lorentz_gram_char}
Let $G \in \R^{N \times N}$ be a Lorentz Gramian of rank $d+1$ with eigenvalues $\psi_0 \leq \cdots \leq \psi_{d}$. Then, $\psi_0 < 0$, and $\psi_i > 0$, for $i \in \set{1, \cdots, d}$.
\end{lemma}
\begin{proof}
We write Lorentzian Gramian, $G = (g_{i,j})$, as $G = X^\T H X$ where
\begin{equation*}
X = [x_1, \cdots, x_N] \bydef  \left[ \begin{array}{c} g_0^\T \\ \vdots \\ g_d^\T \end{array}\right] \in  \R^{(d+1) \times N}.
\end{equation*}
Then, $G = G^{+} - G^{-}$ where $G^{+} \bydef \sum_{i=1}^{d} g_i g_i^\T$ is a positive semi-definite matrix of rank $d$ and with eigenvalues $0 < \gamma_1 \leq \cdots \leq \gamma_{d}$, and $-G^{-} \bydef -g_0 g_0^\T$ is a negative definite matrix of rank $1$, with eigenvalue $\mu \leq 0$. From Weyl's inequality \cite{horn2012matrix}, we have 
\begin{equation*}
\mu + \gamma_1 \leq \psi_0 \leq \mu + \gamma_{d}
\end{equation*}
where $\psi_0$ is the smallest eigenvalue of $G$. Therefore, $\psi_0$ can be non-positive (negative if $\mu + \gamma_{d} < 0$). For other eigenvalues of $G$, we have
\begin{equation*}
0+\gamma_1 \leq \psi_i \leq  \gamma_{d}, ~ \text{for} ~ 1 \leq i \leq d.
\end{equation*}
Hence, $\psi_i > 0$ for $i \in \set{1, \cdots, d}$. This is result is irrespective to the order of eigenvalues. 

Now, let us prove $\psi_0 < 0$.  Suppose $g_0 \in S = \mathrm{span} \set{g_i: i \in \set{1, \cdots, d}}$. Then, 
\begin{equation*}
\rank G = \rank \left[ \begin{array}{c} g_0^\T \\ \vdots \\ g_d^\T \end{array}\right] < d+1,
\end{equation*}
which is a contradiction. Therefore, we write $g_0 = \alpha t + \beta s$ where $s \in S$, $t \in S^{\perp}$ with $\norm{t} = 1$, $\alpha, \beta \in \R$ and $ \alpha \neq 0$. Then, we have
\begin{align*}
\psi_0 &\leq t^\T G t \\
&\stackrel{\text{(a)}}{=} - t^\T g_0 g_0^\T t \\
&= - \alpha^2 < 0
\end{align*}
where $\text{(a)}$ is due to $G = -g_0 g_0^\T + \sum_{i=1}^{d} g_i g_i^\T$ and $t \in S^{\perp}$. 
\end{proof}
Consider eigenvalue decomposition of $G$ in \cref{eq:evd_G}. Without loss of generality, we assume
\begin{itemize}
\item $\lambda_0 = \min_{i} \lambda_i < 0$,
\item $\lambda_i$ is the $i$-th top element of $\set{\lambda_i}$ for $i \in \set{1, \cdots, d}$. 
\end{itemize}
By construction $G = X^\T H X$ and from $\diag{G} = -1$ condition, we have
\begin{equation*}
\sum \lambda_i = -N.
\end{equation*}
Therefore, $\lambda_0 < 0$. From \Cref{lem:lorentz_gram_char}, one eigenvalue of a Lorentz Gramian is negative and the rest must be positive. Therefore, $\hat{G} = U_{d}^\T \Lambda_dU_{d}$ with eigenvalues  $\Lambda_d = \mathrm{diag} \set{\lambda_0, u(\lambda_1), \cdots, u(\lambda_d)}$ and eigenvectors $U_{d} = [u_{0}, \cdots, u_{d}]$, is the best rank-$(d+1)$ Lorentz Gramian approximation to $G$, i.e.
\begin{equation*}
\norm{\hat{G} - G}_{2}^2 = \inf_{H: \text{ Lorentz Gram. of rank } \leq d+1} \norm{H - G}_{2}^{2}.
\end{equation*}
\end{proof}
Finally, a rank-$(d+1)$ Lorentz Gramian with eigenvalue decomposition
\begin{equation*}
G_{d+1} = U_d \Lambda_d U_d^\T
\end{equation*}
can be decomposed as $X = R |\Lambda|^{1/2} U_d^\T \in \R^{(d+1)\times N}$ where $R$ is an arbitrary H-unitary matrix and $G_{d+1} = X^\T H X$.
\section{$\mathtt{Project}:\R^{d} \rightarrow \bL^d$} \label{sec:projection}
\begin{algorithm}[h]
\caption{Projection from $\R^{d+1}$ to $\bL^d$}\label{alg:projection}
\begin{algorithmic}[1]
\Procedure{$\mathtt{Project}$}{$x$}
\State For $x \in \R^{d+1}$, let
\begin{equation*}
\hat{x} = \begin{cases}
(1, 0^\T)^\T & x \in \set{(x_0, 0^\T)^\T: x_0 \leq 2}, \\
(\frac{1}{2}x_0, \hat{x}_1, \cdots, \hat{x}_d)^\T &x \in \set{(x_0, 0^\T)^\T: x_0 > 2} \\
 &\text{and for a }(\hat{x}_1, \cdots, \hat{x}_d) \in S, \\
x(\lambda) &\text{otherwise and for }\lambda: \\
 & \norm{x(\lambda)}_{H}^{2} = -1.
\end{cases}
\end{equation*}
where  $x(\lambda) = (I+\lambda H)^{-1}x$ and
\begin{equation*}
S = \set{(x_1, \cdots, x_d): x_1^2 + \cdots + x_d^2  = -1 + \frac{1}{4}x_0^2}.
\end{equation*}
\State \textbf{return} $\hat{x}$.
\EndProcedure
\end{algorithmic}
\end{algorithm}
\begin{proof}
Let us reformulate the following projection problem
\begin{equation} \label{eq:projection}
\hat{x} \in \argmin_{y \in \bL^d} \norm{y - x}^2
\end{equation}
as unconstrained augmented Lagrangian minimization, i.e.
\begin{equation*}
L(y, \lambda) = \norm{y - x}^2 + \lambda(y^\T Hy + 1).
\end{equation*}
The first order necessary condition for $\hat{x}$ to be a (local) minimum of \cref{eq:projection} is
\begin{equation}\label{eq:fo_optimality}
(I + \lambda^* H)\hat{x} =  x
\end{equation}
for a $\lambda^* \in \R$ such that $\hat{x} \in \bL^d$.  

$\lambda^* = -1$: This happens when $x = (x_0, 0^\T)^\T$ and $x_0 \geq 2$. Following from optimality condition of \cref{eq:fo_optimality} and $\norm{\hat{x}}_{H}^2 = -1$, we have $\hat{x} = (\frac{1}{2}x_0, \hat{x}_1, \cdots, \hat{x}_d)^\T$, where 
\begin{equation*}
\hat{x}_1^2 + \cdots + \hat{x}_d^2  = -1 + \frac{1}{4}x_0^2.
\end{equation*}
Therefore, $\hat{x}$ could be any point on a $(d-1)$-dimensional sphere on $\bL^d$. For $x = (x_0, 0^\T)^\T$ and $x_0 \leq 2$, we have $\hat{x} = (1, 0^\T)^\T$.

$\lambda^* = 1$: This happens for $x = (0, x_1, \cdots, x_d)^\T$. From optimality condition of \cref{eq:fo_optimality}, we have $\hat{x} = (\hat{x}_0, \frac{1}{2}x_1, \cdots, \frac{1}{2}x_d)$, where $\hat{x}_0 = \frac{1}{2}\sqrt{ x_1^2 + \cdots + x_d^2 +4}$. 

For non-degenerate cases of $\lambda^{*} \neq \pm 1$, we have 
\begin{equation}\label{eq:x_hat}
\hat{x} =  (I + \lambda^* H)^{-1}x,
\end{equation} 
where $\lambda^* \in \set{\lambda : \norm{(I + \lambda H)^{-1} x}_{H}^2 = -1, \hat{x}_0 \geq 0}$. 

$(1)$ $\lambda^* \in (-1 , 1)$: First, we define 
\begin{equation*}
f(\lambda) = \norm{(I + \lambda H)^{-1} x}_{H}^2.
\end{equation*}
This is a monotonous function on $(-1, 1)$, with $\lim_{\lambda \rightarrow 1^{-}} f(\lambda) = -\infty$, and  $\lim_{\lambda \rightarrow -1^{+}} f(\lambda) = +\infty$. Hence, $f(\lambda) = -1$ has a unique solution $\lambda^* \in (-1, 1)$. Finally, $\hat{x}$ is a local minima since the second order sufficient condition
\begin{equation*}
I + \lambda^* H \succ 0
\end{equation*}
is satisfied for $\lambda^* \in (-1, 1)$. Lastly, from \cref{eq:x_hat}, we have $\hat{x}_0 x_0 \geq 0$. In other words, $\lambda^* \in [-1 , 1]$ if and only if $x$ is in the same half-space as $\bL^d$, i.e. $x_0 \geq 0$.

$(2)$ $\lambda^* \in (-\infty, -1)$: Similarly, $f(\lambda)$ is a continuous function in this interval with
$\lim_{\lambda \rightarrow -1^{-}} f(\lambda) = +\infty$, $\lim_{\lambda \rightarrow -\infty} f(\lambda) = 0$, and its first order derivative 
\begin{equation*}
\frac{d}{d \lambda} f(\lambda) =  - \frac{2}{(1-\lambda)^3}x_0^2- \frac{2}{(1+\lambda)^3}\sum_{i=1}^{d} x_i^2 
\end{equation*}
has at most one zero. Therefore, $f(\lambda) = -1$ has at most two solutions. The second order necessary condition for local minima is $v^\T ( I + \lambda^* H ) v \geq 0$ for all $v \in T_{\hat{x}} \bL^d$, where
\begin{equation*}
T_{\hat{x}} \bL^d = \set{v \in \R^{d+1} : x^\T (I + \lambda^* H)^{-1} H v = 0}.
\end{equation*}
However, there exists a $v \in T_{\hat{x}} \bL^d$ where $v = (0,\bar{v}^\T)^\T$ which violates the second order necessary condition, $v^\T ( I + \lambda^* H ) v < 0$. Therefore, $\hat{x}$ -- even if it exists -- is not a local minima.

$(3)$ $\lambda^* \in (1, \infty)$: We can easily see that $\lim_{\lambda \rightarrow 1^{+}} f(\lambda) = -\infty$, $\lim_{\lambda \rightarrow +\infty} f(\lambda) = 0$, and $\frac{d}{d \lambda} f(\lambda) = 0$ has at most one solution in this interval. Therefore, $f(\lambda) = -1$ has exactly one solution. However, we have $\hat{x}_0 x_0 \leq 0$ from \cref{eq:x_hat}. In other words, $\lambda^* \in (1, \infty)$ if and only if $x$ is in the opposite half-space of $\bL^d$, i.e. $x_0 \leq 0$. Finally, $\hat{x}$ is the unique minima, since the projection of $x \notin S$ to the closed and convex set of
\begin{equation*}
S = \set{x: x_0 \geq 0, \norm{x}^{2}_{H} \leq -1}
\end{equation*}
always exits and is unique.
\end{proof}

\section{Proof Outline of Proposition \ref{prop:spectral_factorization}}
\label{sec:proof_p2}

Let $X = R |\Lambda|^{1/2} U^\T$. Then, 
\begin{align*}
X^\T H X &= U |\Lambda|^{1/2} R^\T H R |\Lambda|^{1/2} U^\T \\
&\stackrel{\text{(a)}}{=} U |\Lambda|^{1/2} H |\Lambda|^{1/2} U^\T \\
&\stackrel{\text{(b)}}{=}  G
\end{align*}
where $\text{(a)}$ is due to properties of H-unitary matrices, $\text{(b)}$ from $|\lambda_0|^{1/2}(-1)|\lambda_0|^{1/2} =\lambda_0$ for $\lambda_0 \leq 0$. Therefore $X= R |\Lambda|^{1/2} U^\T$ is a hyperbolic spectral factor of $G$. Finally, the uniqueness of these factors is due to fact that $H$-unitary operators fully characterize isometries in the 'Loid model \cite{cannon1997hyperbolic}.
\end{document}